\newtheorem{proposition}{Proposition}
\newtheorem{definition}{Definition}
\newtheorem{example}{Example}
\crefname{theorem}{Theorem}{Theorems}
\crefname{lemma}{Lemma}{Lemmas}
\crefname{proposition}{Proposition}{Propositions}
\crefname{definition}{Definition}{Definitions}
\crefname{corollary}{Corollary}{Corollaries}
\crefname{example}{Example}{Examples}
\crefname{claim}{Claim}{Claims}
\crefname{assumption}{Assumption}{Assumptions}
\newcommand{\fml}[1]{{\mathcal{#1}}}
\newcommand{\tn}[1]{\textnormal{#1}}
\newcommand{\tbf}[1]{\textbf{#1}}
\newcommand{\mbf}[1]{\ensuremath\mathbf{#1}}
\newcommand{\mbb}[1]{\ensuremath\mathbb{#1}}
\newcommand{\mrm}[1]{\ensuremath\mathrm{#1}}
\newcommand{\waxp}{\ensuremath\mathsf{WeakAXp}}
\newcommand{\wcxp}{\ensuremath\mathsf{WeakCXp}}
\newcommand{\axp}{\ensuremath\mathsf{AXp}}
\newcommand{\cxp}{\ensuremath\mathsf{CXp}}
\newcommand{\osm}{\textsf{{\small{one-step method}}}\xspace}
\newcommand{\tsm}{\textsf{{\small{two-step method}}}\xspace}
\newcommand{\stwop}{\mrm{\Sigma}_2^\tn{P}}
\newcommand{\oper}[1]{\ensuremath\textnormal{\smaller{\textsf{#1}}}}
\newcommand{\biglor}{\ensuremath\bigvee}
\newcommand{\bigland}{\ensuremath\bigwedge}
\newcommand{\ter}{\oper{Terminal}}
\newcommand{\ele}{\oper{Element}}
\newcommand{\dec}{\oper{Decision}}
\newcommand{\lval}{\oper{Lit}}
\newcommand{\lfeat}{\oper{Feat}}
\newcommand{\parent}{\mathsf{parent}}
\newcommand{\childn}{\oper{children}}
\newcommand{\satisfy}{\oper{Sat}}
\newcounter{tableeqn}[table]
\renewcommand{\thetableeqn}{\thetable.\arabic{tableeqn}}
\DeclareMathOperator*{\lequiv}{\leftrightarrow}
\DeclareMathOperator*{\limply}{\rightarrow}
\DeclareMathOperator*{\outdeg}{\tn{deg}^{\tn{+}}}
\definecolor{gray}{rgb}{.4,.4,.4}
\definecolor{midgrey}{rgb}{0.5,0.5,0.5}
\definecolor{middarkgrey}{rgb}{0.35,0.35,0.35}
\definecolor{darkgrey}{rgb}{0.3,0.3,0.3}
\definecolor{darkred}{rgb}{0.7,0.1,0.1}
\definecolor{midblue}{rgb}{0.2,0.2,0.7}
\definecolor{darkblue}{rgb}{0.1,0.1,0.5}
\definecolor{darkgreen}{rgb}{0.1,0.5,0.1}
\definecolor{defseagreen}{cmyk}{0.69,0,0.50,0}
\newcommand{\jnoteF}[1]{}
\newcolumntype{L}[1]{>{\raggedright\let\newline\\\arraybackslash\hspace{0pt}}m{#1}}
\newcolumntype{C}[1]{>{\centering\let\newline\\\arraybackslash\hspace{0pt}}m{#1}}
\newcolumntype{R}[1]{>{\raggedleft\let\newline\\\arraybackslash\hspace{0pt}}m{#1}}
\tikzset{
  0 my edge/.style={densely dashed, my edge},
  my edge/.style={-{Stealth[]}},
}
\setlist{nosep,leftmargin=0.45cm}
\title{On Deciding Feature Membership in\\
  Explanations of SDD \& Related Classifiers}
\author{%
Xuanxiang Huang$^1$\and
Joao Marques-Silva$^2$ \\
\affiliations
$^1$University of Toulouse, France\\
$^2$IRIT, CNRS, Toulouse, France\\
\emails
xuanxiang.huang@univ-toulouse.fr,
joao.marques-silva@irit.fr
}
\begin{document}

\maketitle

\begin{abstract}
  When reasoning about explanations of machine Learning (ML)
  classifiers, a pertinent query is to decide whether some sensitive
  features can serve for explaining a given prediction.
  %
  %
  Recent work showed that the feature membership problem (FMP)
  is hard for $\stwop$ for a broad class of classifiers.
  In contrast, this paper shows that for a number of families of
  classifiers, 
  FMP is in NP.
  %
  Concretely, the paper proves that any classifier for which an
  explanation can be computed in polynomial time, then deciding
  feature membership in an explanation can be decided with one NP
  oracle call.
  The paper then proposes propositional encodings for classifiers
  represented with Sentential Decision Diagrams (SDDs) and for other
  related propositional languages.
  %
  %
  The experimental results confirm the practical efficiency of the
  proposed approach.
\end{abstract}

\section{Introduction} \label{sec:intro}

There is a growing interest in eXplainable Artificial Intelligence
(XAI)~\cite{pedreschi-acmcs19,zhu-nlpcc19}. 
This interest is explained in part by the ongoing advances in Machine
Learning (ML) and the resulting uses of ML in settings that impact
humans, including high-risk and safety-critical
applications~\cite{eu-aiact21}.
However, XAI finds other important uses~\cite{weld-cacm19}. XAI can
serve for diagnosing systems that exploit ML. XAI can be used to train
human operators so that they learn from ML-enabled systems.
Most importantly, XAI offers a general instrument for building trust
in the use of systems of ML.

Most of past work on XAI involves so-called model-agnostic approaches.
Model-agnostic XAI offers a practical solution for explaining complex
ML models, and has been deployed in a number of relevant
applications\footnote{E.g.~\url{https://cloud.google.com/explainable-ai}.}.
However, model-agnostic XAI offers no guarantees of rigor, and can
even (and often) produce unsound explanations~\cite{ignatiev-ijcai20}.
Thus, the use of model-agnostic XAI solutions in high-risk and
safety-critical applications is ill-advised, as the lack of rigor
could induce human decision makers in error.
Recent years have seen the inception of formal approaches to XAI
(FXAI)~\cite{darwiche-ijcai18,inms-aaai19,darwiche-ecai20}.
FXAI offers the strongest guarantees of rigor, since reasoning is in
most cases model-precise, i.e.\ the actual ML model is accounted for
when reasoning about explanations, and so explanations are rigorous
with respect to the (logic) representation of the ML model.

Besides the computation of formal explanations, FXAI can answer a
number of additional queries~\cite{marquis-kr20,hiims-kr21}.
Concretely, this paper studies the problem of deciding whether a
feature can occur in some explanation of a given prediction for an ML
classifier.
In some practical uses of an ML classifier, it may be critical to
decide whether a sensitive feature can be used in some explanation. 
For example, for a bank loan application, it would be troubling if a
feature like gender, age, or ethnic origin might serve to explain a
decision on a bank loan.
Recent work~\cite{hiims-kr21} proved that, for classifiers represented 
as DNF (disjunctive normal form) formulas, feature membership is hard
for $\stwop$. Thus, deciding feature membership should in general be
at least as hard as solving a quantified boolean formula with two
levels of quantifiers.
However, it was also shown~\cite{hiims-kr21} that FMP can be
decided in polynomial time in the case of decision trees (DTs), and
that the problem is in NP for the case of classifiers that can be
represented with explanation graphs (XpG's).

The gap in the computational complexity of FMP between DNF formulas
and DTs (and also XpG's) suggests that, for classifiers represented
with specific propositional languages, the complexity of FMP could be
simpler than that of DNF formulas.
This paper proves that this is indeed the case.
The paper starts by proving a more general result, namely that for any
classifier for which one explanation can be computed in polynomial
time, then FMP is in NP (and so FMP can be decided with an oracle for
NP).
The proof of this result offers a general approach for solving FMP,
which entails devising propositional encodings for the target
classifiers. However, the general approach can require large
propositional encodings, which Boolean satisfiability (SAT) reasoners
may be unable to solve efficiently.
As a result, the paper refines the general result, proposing an
alternative simpler approach for deciding FMP.
As demonstrated by the experiments, the proposed refined approach
yields much more compact encodings, which in turn enables SAT solvers
to efficiently decide FMP for different families of classifiers.
Furthermore, the paper details how the proposed approach can be
instantiated for two concrete families of classifiers, namely those
represented with Sentential Decision Diagrams
(SDDs)~\cite{darwiche-ijcai11}, but also those for which the problem
of computing one explanation can be represented with an explanation
graph (XpG's)~\cite{hiims-kr21}.
The experimental results confirm that FMP can be decided for large
SDDs (and also large XpG's) for classification problems with a large
number of features.

\section{Preliminaries} \label{secLprelim}


\paragraph{Classification problems.}
This paper considers classification problems, which are defined on a
set of features (or attributes) $\fml{F}=\{1,\ldots,m\}$ and a set of
classes $\fml{K}=\{c_1,c_2,\ldots,c_K\}$.
Each feature $i\in\fml{F}$ takes values from a domain $\mbb{D}_i$.
In this paper, and unless otherwise indicate, the domains and classes
are assumed to be boolean, i.e.\ $\mbb{B}=\{0,1\}$ and
$\fml{K}=\{0,1\}$. (It will also be convenient to allow
$\fml{K}=\{\bot,\top\}$ for propositional languages.)
%
%
Feature space is defined as
$\mbb{F}=\mbb{D}_1\times{\mbb{D}_2}\times\ldots\times{\mbb{D}_m}=\mbb{B}^m$;
%
%
The notation $\mbf{x}=(x_1,\ldots,x_m)$ denotes an arbitrary point in
feature space, where each $x_i$ is a variable taking values from
$\mbb{D}_i$. The set of variables associated with features is
$X=\{x_1,\ldots,x_m\}$.
Moreover, the notation $\mbf{v}=(v_1,\ldots,v_m)$ represents a
specific point in feature space, where each $v_i$ is a constant
representing one concrete value from $\mbb{D}_i$. 
An ML classifier $\mbb{C}$ is characterized by a (non-constant)
\emph{classification function} $\kappa$ that maps feature space
$\mbb{F}$ into the set of classes $\fml{K}$,
i.e.\ $\kappa:\mbb{F}\to\fml{K}$.
An \emph{instance} 
denotes a pair $(\mbf{v}, c)$, where $\mbf{v}\in\mbb{F}$ and
$c\in\fml{K}$, with $c=\kappa(\mbf{v})$. 
%

\paragraph{Formal explanations.}
In contrast with well-known model-agnostic explanation
approaches~\cite{guestrin-kdd16,lundberg-nips17,guestrin-aaai18,pedreschi-acmcs19}, 
formal explanations are rigorously defined in terms of the function 
computed by the classifier.
Prime implicant (PI) explanations~\cite{darwiche-ijcai18} denote a
minimal set of literals (relating a feature value $x_i$ and a constant
$v_i\in\mbb{D}_i$) 
that are sufficient for the prediction. PI-explanations are related
with abduction, and so are also referred to as abductive explanations
($\axp$)~\cite{inms-aaai19}.
More recently, PI-explanations have been studied
in terms of their computational
complexity~\cite{barcelo-nips20,marquis-kr21}.
Additional examples of recent work on formal explanation
includes~\cite{kutyniok-jair21,kwiatkowska-ijcai21,mazure-cikm21,tan-nips21}.

Formally, given $\mbf{v}=(v_1,\ldots,v_m)\in\mbb{F}$, with
$\kappa(\mbf{v})=c$, an $\axp$ is any minimal subset
$\fml{X}\subseteq\fml{F}$ such that, 
\begin{equation} \label{eq:axp}
  \forall(\mbf{x}\in\mbb{F}).
  \left[
    \bigland\nolimits_{i\in{\fml{X}}}(x_i=v_i)
    \right]
  \limply(\kappa(\mbf{x})=c)
\end{equation}
i.e.\ if the features in $\fml{X}$ is sufficient for the predictions
when these take the values dictated by $\mbf{v}$, and $\fml{X}$ is
irreducible.
%
$\axp$'s can be viewed as answering a `Why?' question, i.e.\ why is some 
prediction made given some point in feature space.
Besides, any subset $\fml{X}'\subseteq\fml{F}$ satisfying
\eqref{eq:axp} is called a \emph{weak} $\axp$ ($\waxp$).
In other words, an $\axp$ is a \emph{subset-minimal} or irreducible
$\waxp$.
Given a set $\fml{X}\subseteq\fml{F}$, the predicate $\waxp(\fml{X})$
is true iff $\fml{X}$ is a weak $\axp$~\footnote{With a mild abuse of
  notation we use the symbols $\waxp$ and $\axp$ to also denote
  predicates defined on sets of features, representing the condition
  of a set denoting, respectively, a $\waxp$ or a $\axp$. We will
  apply the same rationale for other definitions.}.
Similarly, $\axp(\fml{X})$ is true iff $\fml{X}$ is a subset-minimal
$\waxp$.
A different view of explanations is a contrastive
explanation~\cite{miller-aij19}, which answers a `Why Not?' question,
i.e.\ which features can be changed to change the prediction.
A formal definition of contrastive explanation 
is proposed in recent work~\cite{inams-aiia20}.
Given $\mbf{v}=(v_1,\ldots,v_m)\in\mbb{F}$ with $\kappa(\mbf{v})=c$, a
contrastive explanation ($\cxp$) is any minimal set
$\fml{Y}\subseteq\fml{F}$ such that,
\begin{equation} \label{eq:cxp}
  \exists(\mbf{x}\in\mbb{F}).\bigland\nolimits_{j\in\fml{F}\setminus\fml{Y}}(x_j=v_j)\land(\kappa(\mbf{x})\not=c) 
\end{equation}
Likewise, any $\fml{Y}'\subseteq\fml{F}$ satisfying \eqref{eq:cxp}
is called \emph{weak} $\cxp$ ($\wcxp$).
Given a set $\fml{Y}\subseteq\fml{F}$, the predicate $\wcxp(\fml{Y})$
is true iff $\fml{Y}$ is a weak $\cxp$.
Similarly, $\cxp(\fml{Y})$ is true iff $\fml{Y}$ is a subset-minimal
$\wcxp$.
A consequence of the definition of $\waxp(\fml{X})$ and
$\wcxp(\fml{Y})$ is that these predicates are monotone:

\begin{proposition} \label{prop:xpmono}
  If $\waxp(\fml{X})$ (resp.~$\wcxp(\fml{Y})$) holds for
  $\fml{X}\subseteq\fml{F}$ (resp.~$\fml{Y}\subseteq\fml{F}$),
  then
  $\waxp(\fml{X}')$ (resp.~$\wcxp(\fml{Y}')$) also holds for any
  $\fml{X}\subseteq\fml{X}'\subseteq\fml{F}$
  (resp.~$\fml{Y}\subseteq\fml{Y}'\subseteq\fml{F}$).
\end{proposition}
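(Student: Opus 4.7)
The plan is to prove the two monotonicity statements separately by unfolding the definitions from \eqref{eq:axp} and \eqref{eq:cxp}; neither direction requires more than a one-line set-theoretic observation, so the main task is just to record the argument carefully. There is no real obstacle here --- the only subtlety is getting the direction of inclusion right for the WCXp case, where the complement $\fml{F}\setminus(\cdot)$ reverses containment.

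For the WAXp direction, I would fix an arbitrary $\mbf{x}\in\mbb{F}$ and assume the antecedent of \eqref{eq:axp} for $\fml{X}'$, namely $\bigwedge_{i\in\fml{X}'}(x_i=v_i)$. Since $\fml{X}\subseteq\fml{X}'$, this conjunction contains every conjunct of $\bigwedge_{i\in\fml{X}}(x_i=v_i)$, so the latter also holds. By hypothesis $\waxp(\fml{X})$, we conclude $\kappa(\mbf{x})=c$. Since $\mbf{x}$ was arbitrary, $\waxp(\fml{X}')$ follows.

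For the WCXp direction, I would take a witness $\mbf{x}\in\mbb{F}$ provided by $\wcxp(\fml{Y})$, i.e.\ one satisfying $\bigwedge_{j\in\fml{F}\setminus\fml{Y}}(x_j=v_j)$ together with $\kappa(\mbf{x})\neq c$. From $\fml{Y}\subseteq\fml{Y}'\subseteq\fml{F}$ we get the reversed inclusion $\fml{F}\setminus\fml{Y}'\subseteq\fml{F}\setminus\fml{Y}$, so the conjunction over $\fml{F}\setminus\fml{Y}'$ is a subset of the conjuncts already satisfied by $\mbf{x}$. Hence the same $\mbf{x}$ witnesses \eqref{eq:cxp} for $\fml{Y}'$, and $\wcxp(\fml{Y}')$ holds.

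Combining the two cases yields the proposition. I do not anticipate any technical difficulty; the proof is essentially bookkeeping about how enlarging the fixed-feature set strengthens a universally quantified statement and weakens an existentially quantified one.
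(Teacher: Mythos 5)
Your proof is correct and is exactly the routine definition-unfolding the paper has in mind: the paper states this proposition without proof, as an immediate consequence of the definitions, and your two cases (strengthening the antecedent of the universally quantified implication for $\waxp$, and reusing the existential witness over the smaller complement for $\wcxp$) supply precisely the omitted argument. No gaps.
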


Building on the results of R.~Reiter in model-based
diagnosis~\cite{reiter-aij87},~\cite{inams-aiia20} proves a minimal
hitting set (MHS) duality
relation between $\axp$s and $\cxp$s, i.e.\ $\axp$s are MHSes of $\cxp$s and
vice-versa.
%
%

\paragraph{SDD classifiers.}
SDDs represent a well-known
propositional language~\cite{darwiche-ijcai11,darwiche-aaai15}
that support efficient operations for building and manipulating
Boolean functions.
Similar to other circuit-based representations, e.g.\ binary decision
diagrams (BDDs) or decision graphs~\cite{hiims-kr21}, SDDs can be used
as binary classifiers~\cite{hiicams-aaai22,hiicams-corr21}.
SDDs are based on a decomposition type~\cite{darwiche-ijcai11}
called \textit{partitions} which can decompose a
Boolean function as $(p_1 \land q_1) \lor \dots \lor (p_n \land q_n)$,
where each $p_i$ is called a \textit{prime} and each
$q_i$ is called a \textit{sub}.
The \textit{primes} are mutually exclusive, exhaustive and non-false.
%
What's more, the process of decomposition
is governed by a variable tree (\textit{vtree})
\cite{darwiche-ijcai11}.

As depicted in \autoref{fig:sdd_representation}, an SDD is a directed
acyclic graph (DAG) defined on $\mbb{B}^{m}$. 
Each circled node with outgoing edges is a \textit{decision node} and
represents the disjunction of its children.
Each paired-box node is an \textit{element} and represents the
conjunction of the two boxes.
The left (resp.~right) box represents the prime (resp.~sub).
A box either contains a \textit{terminal} SDD (i.e. $\top$, $\bot$ or a literal) or a link to a decision node.
\autoref{fig:sdd_vtree} shows a \textit{balanced} vtree,
where each leaf is a feature/variable.

\begin{figure}[]
	\centering
	\begin{subfigure}[b]{0.3\textwidth}
		\centering
		\scalebox{0.7}{\begin{tikzpicture}[>=latex',line join=bevel]
		node distance={1.5cm},
		\node (0) [label=above:{\small $n^k_1$}] at (0, 0) [draw,fill=white,circle] {6};
		\node (1) [label=left:{\small $n^k_2$}] at (-2, -1.5) [draw,fill=white,rectangle split, rectangle split horizontal,
		rectangle split parts=2] {\nodepart{one}$$\nodepart{two}$\top$};
		\node (2) [label=left:{\small $n^k_3$}] at (0, -1.5) [draw,fill=white,rectangle split, rectangle split horizontal,
		rectangle split parts=2] {\nodepart{one}$$\nodepart{two}$W$};
		\node (3) [label=left:{\small $n^k_4$}] at (2, -1.5) [draw,fill=white,rectangle split, rectangle split horizontal,
		rectangle split parts=2] {\nodepart{one}$\neg P$\nodepart{two}$$};
		\node (4) [label=left:{\small $n^k_5$}] at (-2, -3) [draw,fill=white,circle] {2};
		\node (5) [label=below:{\small $n^k_8$}] at (-3.25, -4.5) [draw,fill=white,rectangle split, rectangle split horizontal,
		rectangle split parts=2] {\nodepart{one}$P$\nodepart{two}$Y$};
		\node (6) [label=below:{\small $n^k_9$}] at (-1.5, -4.5) [draw,fill=white,rectangle split, rectangle split horizontal,
		rectangle split parts=2] {\nodepart{one}$\neg P$\nodepart{two}$\bot$};
		\node (7) [label=left:{\small $n^k_6$}] at (0, -3) [draw,fill=white,circle] {2};
		\node (8) [label=left:{\small $n^k_7$}] at (2, -3) [draw,fill=white,circle] {5};
		\node (9) [label=below:{\small $n^k_{10}$}] at (0, -4.5) [draw,fill=white,rectangle split, rectangle split horizontal,
		rectangle split parts=2] {\nodepart{one}$P$\nodepart{two}$\neg Y$};
		\node (10) [label=below:{\small $n^k_{11}$}] at (1.5, -4.5) [draw,fill=white,rectangle split, rectangle split horizontal,
		rectangle split parts=2] {\nodepart{one}$M$\nodepart{two}$W$};
		\node (11) [label=below:{\small $n^k_{12}$}] at (3, -4.5) [draw,fill=white,rectangle split, rectangle split horizontal,
		rectangle split parts=2] {\nodepart{one}$\neg M$\nodepart{two}$\bot$};
		\draw [->] (0) -- (1);
		\draw [->] (0) -- (2);
		\draw [->] (0) -- (3);
		\draw [*->] (1.one)+(0.2em,0.4em) -- (4);
		\draw [->] (4) -- (5);
		\draw [->] (4) -- (6);
		\draw [*->] (2.one)+(0.2em,0.4em) -- (7);
		\draw [->] (7) -- (6);
		\draw [->] (7) -- (9);
		\draw [*->] (3.two)+(0.2em,0.4em) -- (8);
		\draw [->] (8) -- (10);
		\draw [->] (8) -- (11);
\end{tikzpicture}}
		\caption{An SDD classifier}
		\label{fig:sdd_representation}
	\end{subfigure}
	\begin{subfigure}[b]{0.1\textwidth}
		\centering
		\scalebox{0.6}{\begin{tikzpicture}[>=latex',line join=bevel,]
\node (n3) at (63.0bp,109.5bp) [draw,draw=none] {6};
  \node (n1) at (45.0bp,60.5bp) [draw,draw=none] {2};
  \node (n5) at (81.0bp,60.5bp) [draw,draw=none] {5};
  \node (n0) at (9.0bp,9.0bp) [draw,draw=none] {P};
  \node (n2) at (45.0bp,9.0bp) [draw,draw=none] {Y};
  \node (n4) at (81.0bp,9.0bp) [draw,draw=none] {M};
  \node (n6) at (117.0bp,9.0bp) [draw,draw=none] {W};
  \draw [] (n3) ..controls (57.301bp,93.618bp) and (50.546bp,75.981bp)  .. (n1);
  \draw [] (n3) ..controls (68.699bp,93.618bp) and (75.454bp,75.981bp)  .. (n5);
  \draw [] (n1) ..controls (34.517bp,45.086bp) and (22.314bp,28.307bp)  .. (n0);
  \definecolor{strokecol}{rgb}{0.0,0.0,0.0};
  \pgfsetstrokecolor{strokecol}
  \draw (11.99bp,23.736bp) node {0};
  \draw [] (n1) ..controls (45.0bp,45.086bp) and (45.0bp,28.307bp)  .. (n2);
  \draw (41.0bp,23.736bp) node {1};
  \draw [] (n5) ..controls (81.0bp,45.086bp) and (81.0bp,28.307bp)  .. (n4);
  \draw (85.0bp,23.736bp) node {3};
  \draw [] (n5) ..controls (91.483bp,45.086bp) and (103.69bp,28.307bp)  .. (n6);
  \draw (112.91bp,23.736bp) node {4};
\end{tikzpicture}}
		\caption{A vtree}
		\label{fig:sdd_vtree}
	\end{subfigure}
	\caption{SDD representation for a binary classification function $\kappa(P,Y,M,W)=(Y \land P) \lor (P \land W) \lor (W \land M)$,
	given a vtree.}
	\label{fig:sdd_example}
\end{figure}
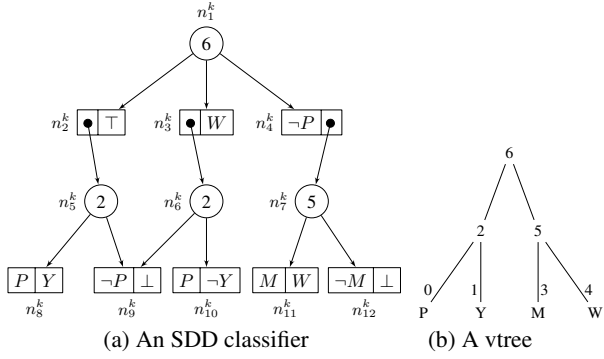

\paragraph{Queries and transformations.}
In this paper, we only consider a number of queries and transformations
that are supported by SDDs; these  are the query \tbf{CO} (polytime
consistency check), and the transformations \tbf{CD} (polytime
conditioning)~\footnote{Note that canonical SDDs don't support
  polytime conditioning~\cite{darwiche-aaai15}.}
and $\neg$\tbf{C} (polytime negation).
%
%
Let \tbf{L} denotes a propositional language and $\rho$ denotes a term (i.e. conjunction of literals),
we have the following standard definitions.
\begin{definition}[Conditioning~\cite{darwiche-jair02}]
	Let $\Phi$ represent a propositional formula and let $\rho$ denote
	a consistent term. The \emph{conditioning} of $\Phi$ on $\rho$,
	i.e.\ $\Phi|_{\rho}$, is the formula obtained by replacing each
	variable $x_i$ by $\top$ (resp.~$\bot$) if $x_i$ (resp.~$\neg{x_i}$)
	is a positive (resp.~negative) literal of $\rho$.
\end{definition}
\begin{definition}[Queries \& transformations~\cite{darwiche-jair02}]
	The following queries and transformations are used throughout with
	respect to a propositional language \tbf{L}:
	\begin{itemize}[nosep]
		\item \tbf{L} satisfies \tbf{CO}
		iff there exists a polytime algorithm that maps
		every formula $\Phi$ from \tbf{L} to 1 if $\Phi$ is consistent,
		and to 0 otherwise.
		\item \tbf{L} satisfies \tbf{CD} iff
		there exists a polytime algorithm that maps every formula
		$\Phi$ from \tbf{L} and every consistent term $\rho$ into a
		formula from \tbf{L} that is logically equivalent to $\Phi|_{\rho}$.
		\item \tbf{L} satisfies $\neg$\tbf{C} iff there exists a
		polytime algorithm that maps every formula $\Phi$ from \tbf{L}
		to a formula of \tbf{L} that is logically equivalent to $\neg \Phi$.
	\end{itemize}
\end{definition}
%

\paragraph{Related classifiers \& XpGs.}
Apart from SDDs, we also consider other graph-based classifiers,
for which the computation of one explanation can be represented with
explanation graphs (XpG's)~\cite{hiims-kr21} (and references therein).
Concrete examples include Decision Trees
(DTs)~\cite{quinlan1986induction}, Ordered Binary Decision Diagrams
(OBDDs)~\cite{bryant-tcomp86}, Ordered Multi-Valued Decision Diagrams
(OMDDs)~\cite{brayton-tr90} and Decision Graphs
(DGs)~\cite{oliver-tr92}. (For DTs, DGs and OMDDs both the domains of
features and the set of classes may not be boolean.)
We include below a brief overview XpG's~\cite{hiims-kr21}.
\begin{definition}[Explanation Graph (XpG)]
An XpG is a 5-tuple
$\fml{D}=(G_{\fml{D}},S,\upsilon,\alpha_{V},\alpha_{E})$, where:
    \begin{enumerate}[nosep]
    \item $G_{\fml{D}}=(V_{\fml{D}},E_{\fml{D}})$ is a labeled DAG, such
    that:
    \begin{itemize}[nosep]
    \item $V_{\fml{D}}=T_{\fml{D}}\cup{N_{\fml{D}}}$ is the set of
    nodes, partitioned into the terminal nodes $T_{\fml{D}}$
    (with $\outdeg(q)=0$, $q\in{T_{\fml{D}}}$) and the non-terminal
    nodes $N_{\fml{D}}$ (with $\outdeg(p)>0$, $p\in{N_{\fml{D}}}$);
    \item $E_{\fml{D}}\subseteq{V_{\fml{D}}}\times{V_{\fml{D}}}$ is
    the set of (directed) edges.
    \item $G_{\fml{D}}$ is such that there is a single node with
    indegree equal to 0, i.e.\ the root (or source) node.
    \end{itemize}
    \item $S=\{s_1,\ldots,s_m\}$ is a set of variables;
    \item $\upsilon:N_{{\fml{D}}}\to{S}$ is a total function mapping
    each non-terminal node to one variable in $S$.
    \item $\alpha_V:V_{\fml{D}}\to\{0,1\}$ labels nodes with one of two
    values.\\
    ($\alpha_{V}$ is required to be defined only for terminal nodes.)
    \item $\alpha_E:E_{\fml{D}}\to\{0,1\}$ labels edges with one of two
    values.
    \end{enumerate}
    In addition, an XpG $\fml{D}$ must respect the following properties:
    \begin{enumerate}[nosep,label=\roman*.]
    \item For each non-terminal node, there is at most one outgoing edge
    labeled 1; all other outgoing edges are labeled 0.
    \item There is exactly one terminal node $t\in{T}$ labeled 1 that
    can be reached from the root node with (at least) one path of
    edges labeled 1.
    \end{enumerate}
\end{definition}
We refer to a \emph{tree XpG} when the DAG associated with the XpG is
a tree.
Given a DAG $\fml{G}$ representing a classifier
$\mbb{C} \in \{\text{DTs}, \text{OBDDs}, \text{OMDDs}, \text{DGs}\}$,
and an instance $(\mbf{v},c)$, the (unique)
mapping to an XpG is obtained as follows:
\begin{enumerate}[nosep]
    \item The same DAG is used.
    \item Terminal nodes labeled $c$ in $\fml{G}$ are labeled 1 in
    $\fml{D}$. Terminal nodes labeled $c'\not=c$ in $\fml{G}$ are
    labeled 0 in $\fml{D}$.
    \item A non-terminal node associated with feature $i$ in $\fml{G}$
    is associated with $s_i$ in $\fml{D}$.
    \item Any edge labeled with a literal that is consistent with
    $\mbf{v}$ in $\fml{G}$ is labeled 1 in $\fml{D}$. Any edge labeled
    with a literal that is not consistent with $\mbf{v}$ in $\fml{G}$ is
    labeled 0 in $\fml{D}$.
\end{enumerate}
%
%
%
\autoref{fig:xpg_representation} shows an XpG
mapped from an OBDD classifier (\autoref{fig:bdd_representation})
and an instance.
\subsubsection{Evaluation of XpG's.}
Given an XpG $\fml{D}$, let $\mbb{S}=\mbb{B}^{m}$, i.e.\ the set of
possible assignments to the variables in $S$. The evaluation function
of the XpG, $\sigma_{\fml{D}}:\mbb{S}\to\{0,1\}$, is based on the
auxiliary \emph{activation} function
$\varepsilon:\mbb{S}\times{V_{\fml{D}}}\to\{0,1\}$.
Moreover, for a point $\mbf{s}\in\mbb{S}$, $\sigma_{\fml{D}}$ and
$\varepsilon$ are defined as follows:
\begin{enumerate}[nosep]
\item If $j$ is the root node of $G_{\fml{D}}$, then
    $\varepsilon(\mbf{s},j)=1$.
    \item Let $p\in\parent(j)$ (i.e.\ a node can have multiple parents)
    and let $s_i=\upsilon(p)$.
    $\varepsilon(\mbf{s},j)=1$ iff $\varepsilon(\mbf{s},p)=1$ and either
    $\alpha_{E}(p,j)=1$ or $s_i=0$, i.e.
    \begin{equation}\small
    \varepsilon(\mbf{s},j)\lequiv
    \biglor_{\substack{p\in\parent(j)\\\land\neg\alpha_{E}(p,j)}}\left(\varepsilon(\mbf{s},p)\land\neg{s_i}\right)
    \biglor_{\substack{p\in\parent(j)\\\land\alpha_{E}(p,j)}}\varepsilon(\mbf{s},p)
    \end{equation}
    \item $\sigma_{\fml{D}}(\mbf{s})=1$ iff for every terminal node
    $j\in{T_{\fml{D}}}$, with $\alpha_{V}(j)=0$, it is also the case that
    $\varepsilon(\mbf{s},j)=0$, i.e.
    \begin{equation}\small
    \sigma_{\fml{D}}(\mbf{s})\lequiv
    \bigland\nolimits_{j\in{T_{\fml{D}}}\land\neg\alpha_{V}(j)}\neg\varepsilon(\mbf{s},j)
    \end{equation}
\end{enumerate}
Terminal nodes labeled 1 are irrelevant for defining $\sigma_{\fml{D}}$.
Their existence is implicit (i.e.\ at least one
terminal node with label 1 must exist and be reachable from the root
when all the $s_i$ variables take value 1), but the evaluation of
$\sigma_{\fml{D}}$ is oblivious to their existence.
Furthermore, and as noted above, we must have
$\sigma_{\fml{D}}(1,\ldots,1)=1$.
If the graph has some terminal node labeled 0, then
$\sigma_{\fml{D}}(0,\ldots,0)=0$.
This implies that
$\sigma_{\fml{D}} = 1$ if the prediction of the original classifier remain unchanged,
and $\sigma_{\fml{D}} = 0$ if the prediction of the original classifier changed.
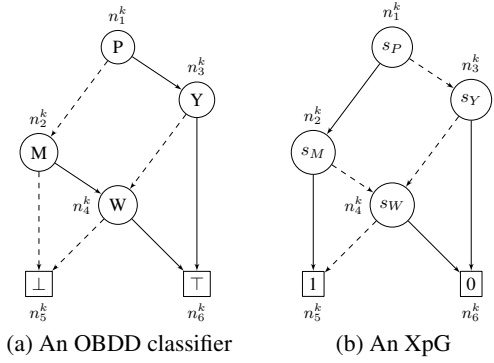
\begin{figure}[]
	\centering
	\begin{subfigure}[b]{0.2\textwidth}
	\centering
		\scalebox{0.7}{\begin{tikzpicture}[>=latex',line join=bevel]
		node distance={1.5cm},
		\node (0) [label=above:{\small $n^k_1$}] at (0, 0) [draw,fill=white,circle] {P};
		\node (1) [label=above:{\small $n^k_2$}] at (-1.5, -2) [draw,fill=white,circle] {M};
		\node (2) [label=above:{\small $n^k_3$}] at (1.5, -1) [draw,fill=white,circle] {Y};
		\node (3) [label=left:{\small $n^k_4$}] at (0, -3) [draw,fill=white,circle] {W};
		\node (4) [label=below:{\small $n^k_5$}] at (-1.5, -4.5) [draw,fill=white,rectangle] {$\bot$};
		\node (5) [label=below:{\small $n^k_{6}$}] at (1.5, -4.5) [draw,fill=white,rectangle] {$\top$};
		\draw [->] (0) edge [dashed] (1);
		\draw [->] (0) -- (2);
		\draw [->] (1) -- (3);
		\draw [->] (1) edge [dashed] (4);
		\draw [->] (2) edge [dashed] (3);
		\draw [->] (2) -- (5);
		\draw [->] (3) edge [dashed] (4);
		\draw [->] (3) -- (5);
\end{tikzpicture}}
		\caption{An OBDD classifier}
		\label{fig:bdd_representation}
	\end{subfigure}
	\begin{subfigure}[b]{0.2\textwidth}
		\centering
		\scalebox{0.7}{\begin{tikzpicture}[>=latex',line join=bevel]
		node distance={1.5cm},
		\node (0) [label=above:{\small $n^k_1$}] at (0, 0) [draw,fill=white,circle] {$s_P$};
		\node (1) [label=above:{\small $n^k_2$}] at (-1.5, -2) [draw,fill=white,circle] {$s_M$};
		\node (2) [label=above:{\small $n^k_3$}] at (1.5, -1) [draw,fill=white,circle] {$s_Y$};
		\node (3) [label=left:{\small $n^k_4$}] at (0, -3) [draw,fill=white,circle] {$s_W$};
		\node (4) [label=below:{\small $n^k_5$}] at (-1.5, -4.5) [draw,fill=white,rectangle] {1};
		\node (5) [label=below:{\small $n^k_{6}$}] at (1.5, -4.5) [draw,fill=white,rectangle] {0};
		\draw [->] (0) -- (1);
		\draw [->] (0) edge [dashed] (2);
		\draw [->] (1) edge [dashed] (3);
		\draw [->] (1) -- (4);
		\draw [->] (2) edge [dashed] (3);
		\draw [->] (2) -- (5);
		\draw [->] (3) edge [dashed] (4);
		\draw [->] (3) -- (5);
\end{tikzpicture}}
		\caption{An XpG}
		\label{fig:xpg_representation}
	\end{subfigure}
	\caption{OBDD representation for $\kappa(P,Y,M,W)=(Y \land P) \lor (P \land W) \lor (W \land M)$,
	and its corresponding XpG given the instance Ella = \{Y, $\neg$ P, W, $\neg$ M\}.
	Edges corresponding to value 0 (resp. value 1) are indicated by dashed lines (resp. solid lines).
	Non-terminals are represented as circle nodes, terminal nodes are represented as boxes.}
	\label{fig:xpg_example}
\end{figure}

\paragraph{Feature membership.}
Let $\mbb{C}$ be a classifier defined on a set of features $\fml{F}$,
a set of classes $\fml{K}$, with feature space $\mbb{F}$, and
computing function $\kappa$.
The feature membership considered in this paper is adapted from
earlier work~\cite{hiims-kr21}:
\begin{definition}
  Given a classifier $\mbb{C}$, an instance $(\mbf{v},c)$ and a
  feature $r\in\fml{F}$, the feature membership problem (FMP) is to
  decide whether target feature $t$ is included in some explanation of 
  instance $(\mbf{v},c)$.
\end{definition}
Previous work~\cite{hiims-kr21} established that for a DNF classifier,
FMP is $\stwop$-hard, but that for DTs, FMP is in P.
Moreover,~\cite{hiims-kr21} proved that a target feature $t$ is
included in some of the $\axp$s iff it is included in some of the
$\cxp$s.
As a result, in this paper, we will focus mainly of deciding FMP on
some $\axp$s.
One additional result in~\cite{hiims-kr21} is a proof that FMP for
XpG's is in NP.
\begin{example}
	Throughout the paper, we consider a staff recruitment scenario
	as our running example.
	In this scenario, we have a binary classification function
	$\kappa(P,Y,M,W)=(Y \land P) \lor (P \land W) \lor (W \land M)$.
	Its input are four features:
	1) \textbf{Y}oung is true if the age of an applicant is less than 24; 
	2) To\textbf{P} is true if the applicant graduated from a top university.
	3) \textbf{M}ale is true if the applicant is male.
	4) \textbf{W}ork is true if the applicant has work experience.
	Its output is either $\top$ (accept) or $\bot$ (reject).
	Applicant Ella = \{Y, $\neg$ P, W, $\neg$ M\} get $\bot$.
	\autoref{fig:sdd_representation} shows the SDD representation
	of this classification function. \autoref{fig:bdd_representation}
	shows the OBDD representation of this classification function,
	and \autoref{fig:xpg_representation} shows the XpG representation
	of this OBDD and this applicant Ella.
	To test if this classifier is biased on feature \textbf{M}ale,
	we solve the query:
	\emph{is there an $\axp$ containing feature \textbf{M}ale}.
\end{example}

\section{Classifiers with FMP in NP}

This section proves results that are used throughout. First, we
prove that finding an $\axp$/$\cxp$ of an SDD classifier runs in
polynomial time. Second, we prove that for any classifier for which
computing one $\axp$/$\cxp$ runs in polynomial time, then deciding FMP is in NP.

\subsection{Finding one AXp and CXp for SDD Classifiers}
We assume that the target binary classification functions are
completely specified. This means that for any point in feature space,
the classifier either predicts $\top$ or $\bot$. 
\begin{proposition} \label{prop:sddaxp}
  Finding one AXp of a decision taken by a SDD $\mbb{C}$ is
  polynomial-time.
\end{proposition}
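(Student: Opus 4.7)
The plan is to use the classical deletion-based procedure for producing a subset-minimal $\waxp$. Starting with $\fml{X} = \fml{F}$ (which is a $\waxp$, since fixing every feature trivially forces the prediction), I would traverse the features in some fixed order: for each $i \in \fml{F}$, tentatively remove $i$ and test whether $\fml{X} \setminus \{i\}$ still satisfies $\waxp$; if it does, commit the removal, otherwise keep $i$. By the monotonicity result in \cref{prop:xpmono}, this procedure terminates with a set that is a $\waxp$ no proper subset of which is a $\waxp$, i.e.\ with an $\axp$. The outer loop performs exactly $m$ weak-AXp tests, so the whole algorithm is polynomial provided each such test is polynomial.

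The heart of the argument is then to show that, for an SDD classifier $\kappa$, testing $\waxp(\fml{Y})$ is polynomial. Let $\rho_{\fml{Y}} = \bigwedge_{i \in \fml{Y}}(x_i = v_i)$ be the term that fixes the features in $\fml{Y}$ to their values in $\mbf{v}$. Definition~\eqref{eq:axp} requires $\rho_{\fml{Y}} \limply (\kappa = c)$ to be valid. When $c = 1$ this is equivalent to $\neg \kappa \land \rho_{\fml{Y}}$ being inconsistent, so it suffices to build an SDD for $\neg \kappa$ using $\neg$\tbf{C}, condition the result on $\rho_{\fml{Y}}$ using \tbf{CD}, and run a \tbf{CO} query on the outcome. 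When $c = 0$ the check is symmetric and even simpler: condition $\kappa$ itself on $\rho_{\fml{Y}}$ via \tbf{CD} and test inconsistency with \tbf{CO}. Since SDDs support $\neg$\tbf{C}, \tbf{CD} and \tbf{CO} in polynomial time with polynomial-size outputs, each weak-AXp test is polynomial.

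Putting the two pieces together, the overall algorithm performs $O(m)$ polynomial-time weak-AXp tests from the same starting SDD and therefore runs in polynomial time. One implementation remark worth noting, to keep the exponent low, is that the SDD for $\neg \kappa$ need only be constructed once, outside the outer loop, and reused across the $m$ tests in the $c = 1$ case; the total cost is then a single $\neg$\tbf{C} application plus $O(m)$ applications of \tbf{CD} and \tbf{CO}. I do not foresee a conceptual obstacle beyond verifying this reduction, as the polytime closure of SDDs under \tbf{CD}, $\neg$\tbf{C} and \tbf{CO} does all the real work; the only mild care needed is in dispatching on the value of $c$ so that the right SDD (either $\kappa$ or $\neg \kappa$) is the one being conditioned and queried.
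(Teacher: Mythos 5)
Your proposal is correct and follows essentially the same route as the paper: reduce the weak-AXp test to a \tbf{CD} plus \tbf{CO} query on the SDD (after a single $\neg$\tbf{C} application when $c=\top$), and wrap it in the standard deletion-based minimization justified by \cref{prop:xpmono}. The paper's proof is merely terser, leaving the linear deletion loop implicit, so no substantive difference remains.
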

\begin{proof}
  Let $(\mbf{v},c)$ be such that $c = \bot$.
  Our goal is then to find a $\fml{X}$ such that $\kappa|_{\left[\bigland\nolimits_{i\in{\fml{X}}}(x_i=v_i)\right]}$
  is inconsistent with the features in $\fml{X}$ fixed, but becomes
  consistent if any feature $i$ is removed from $\fml{X}$. 
  Since SDD satisfies \tbf{CD} and \tbf{CO}, then fixing feature
  $i\in\fml{X}$ to the $v_i$ (i.e.\ coordinate $i$ of $\mbf{v}$) 
  can be done in polynomial time, and checking the consistency of
  the $\kappa|_{\left[\bigland\nolimits_{i\in{\fml{X}}}(x_i=v_i)\right]}$
  can also be done in polynomial time.\\
  In the case of $c = \top$. Since SDD satisfies $\neg$\tbf{C},
  then we can construct a new SDD classifier $\mbb{C}'$ in polynomial
  time by using the negation operation. Then any instance classified
  as $\top$ in the original classifier $\mbb{C}$ is classified as
  class $\bot$ in the new classifier $\mbb{C}'$.
  This means finding an $\axp$ $\fml{X}$ of an instance with prediction
  $\top$ in the original classifier $\mbb{C}$ can be done in
  polynomial time in the new classifier $\mbb{C}'$.
\end{proof}
\begin{proposition} \label{prop:sddcxp}
  Finding one CXp of a decision taken by a SDD $\mbb{C}$ is
  polynomial-time.
\end{proposition}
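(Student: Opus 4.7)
My plan is to mirror the proof of \cref{prop:sddaxp}, adapting the per-step test to the contrastive setting. Let $(\mbf{v},c)$ be the instance. I would run the standard deletion-based algorithm on the monotone predicate $\wcxp$: initialize $\fml{Y}=\fml{F}$, then sweep once through the features $i\in\fml{F}$, replacing $\fml{Y}$ by $\fml{Y}\setminus\{i\}$ whenever the resulting set is still a $\wcxp$. By the monotonicity of $\wcxp$ established in \cref{prop:xpmono}, any feature that cannot be dropped at some iteration must belong to every $\wcxp$ contained in the current $\fml{Y}$, so termination will yield a subset-minimal $\wcxp$, i.e.\ a $\cxp$.

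The core subroutine to implement is the weak-$\cxp$ test: given $\fml{Y}'\subseteq\fml{F}$, decide whether there exists $\mbf{x}\in\mbb{F}$ with $x_j=v_j$ for all $j\in\fml{F}\setminus\fml{Y}'$ and $\kappa(\mbf{x})\not=c$. Using \tbf{CD}, I would first condition the SDD on the fixed coordinates, obtaining a new SDD in polynomial time. If $c=\bot$, the test reduces directly to a consistency check on this conditioned SDD, which is polynomial by \tbf{CO}. If $c=\top$, I would apply the same preprocessing trick used in \cref{prop:sddaxp}: replace $\mbb{C}$ once by $\mbb{C}'=\neg\mbb{C}$ using $\neg$\tbf{C} (polynomial), after which the test collapses to the $c=\bot$ case on $\mbb{C}'$.

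Composing these ingredients yields a polynomial-time algorithm: the one-off negation (if needed) is polynomial, each of the $|\fml{F}|$ iterations performs one conditioning and one consistency check, and the seed $\fml{Y}=\fml{F}$ is itself a $\wcxp$ precisely when the classifier is non-constant at $c$, which can be checked by a single \tbf{CO} call on $\mbb{C}'$ (or $\mbb{C}$) up front. If even this initial check fails, no $\cxp$ exists and the algorithm reports accordingly.

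The only delicate point, common to all deletion-based minimization arguments, is confirming that greedy contraction cannot overshoot a subset-minimal witness; this is taken care of by \cref{prop:xpmono}, exactly as in \cref{prop:sddaxp}. I therefore do not anticipate a genuine technical obstacle — the main thing to get right will be the case split on $c$ and the observation that \tbf{CD}, \tbf{CO}, and $\neg$\tbf{C} are all that is required from the SDD language to discharge each weak-$\cxp$ query.
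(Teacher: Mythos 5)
Your proposal is correct and takes essentially the same route as the paper, which simply notes that \cref{prop:sddcxp} follows by the argument of \cref{prop:sddaxp} with the test changed to: $\kappa|_{\left[\bigland\nolimits_{i\in\fml{F}\setminus\fml{Y}}(x_i=v_i)\right]}$ is consistent but becomes inconsistent once any $i\in\fml{Y}$ is moved back into the fixed set. Your write-up merely makes explicit the ingredients the paper leaves implicit (the deletion-based minimization, the appeal to \cref{prop:xpmono}, and the use of \tbf{CD}, \tbf{CO} and $\neg$\tbf{C} with the case split on $c$), all of which are consistent with the paper's intent.
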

\cref{prop:sddcxp} can be proved with the similar argument described in the proof of \cref{prop:sddaxp}.
But the difference is to find a $\fml{Y}$ such that $\kappa|_{\left[\bigland\nolimits_{i\in\fml{F}\setminus\fml{Y}}(x_i=v_i)\right]}$
is consistent with the features in $\fml{F} \setminus \fml{Y}$ fixed,
but becomes inconsistent if any feature $i \in \fml{Y}$ is added to $\fml{F} \setminus \fml{Y}$.
%
 %


\subsection{Classifiers with Polynomial-Time Explanations}

This section proves that, for several families of classifiers, FMP is
in NP, and so can be decided with an NP oracle call. (In contrast with
earlier work~\cite{hiims-kr21}, that includes a similar proof for
XpG's, our proof is independent of a concrete classifier, depending
only on the fact that one explanation is computed in polynomial
time.)
Concretely, we prove that, if given $\fml{X}\subseteq\fml{F}$,
deciding~\eqref{eq:axp} (or~\eqref{eq:cxp}) is in P, then deciding FMP
is in NP.

\begin{proposition} \label{prop:fmpnp}
  Given a classifier for which~\eqref{eq:axp} (or~\eqref{eq:cxp}) can
  be decided in polynomial time, then FMP is in NP.
\end{proposition}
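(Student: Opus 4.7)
The plan is to give a standard guess-and-check argument. Since we need to show that FMP lies in NP, I will exhibit a polynomial-size certificate (a candidate explanation containing the target feature) together with a polynomial-time verification procedure that uses the hypothesised polynomial-time test for \eqref{eq:axp} (or symmetrically for \eqref{eq:cxp}) as a subroutine.

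Concretely, on input $(\mbb{C}, (\mbf{v},c), r)$, I would nondeterministically guess a subset $\fml{X} \subseteq \fml{F}$ with $|\fml{X}| \le m$, and then verify three conditions in deterministic polynomial time: (i) $r \in \fml{X}$; (ii) $\waxp(\fml{X})$ holds, which is a single invocation of the polynomial-time decision procedure for \eqref{eq:axp}; and (iii) for every $i \in \fml{X}$, $\waxp(\fml{X}\setminus\{i\})$ does \emph{not} hold, which requires at most $m$ additional calls to the same polynomial-time procedure. If all three checks succeed, then $\fml{X}$ is subset-minimal among weak AXp's by \cref{prop:xpmono} (removing any further feature would only make things worse, but since each single removal already destroys the weak-AXp property and the predicate is monotone, $\fml{X}$ is an AXp), and it contains $r$; so $r$ occurs in some explanation. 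Conversely, any AXp $\fml{X}^\star$ of $(\mbf{v},c)$ with $r \in \fml{X}^\star$ is itself a valid certificate that the verification procedure will accept.

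For the CXp branch of the hypothesis, I would use exactly the same pattern: guess $\fml{Y} \subseteq \fml{F}$ with $r \in \fml{Y}$, verify $\wcxp(\fml{Y})$ in polynomial time by the given oracle, and verify irreducibility by $|\fml{Y}|$ polynomial-time checks of $\neg\wcxp(\fml{Y}\setminus\{i\})$, again appealing to monotonicity (\cref{prop:xpmono}). To conclude that this also decides FMP as originally stated in terms of AXp's, I would invoke the result of \cite{hiims-kr21} recalled earlier, namely that a feature belongs to some AXp iff it belongs to some CXp (a consequence of the Reiter-style MHS duality between AXp's and CXp's).

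There is no real obstacle here; the proof is essentially a template application of the AXp/CXp definitions together with monotonicity. The only point that deserves explicit mention is that the minimality test reduces to polynomially many weak-explanation checks rather than requiring a coNP subroutine, which is exactly what keeps the overall problem in NP rather than pushing it up to $\ddp$ or higher.
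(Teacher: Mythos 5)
Your proposal is correct and follows essentially the same route as the paper's proof: guess a set $\fml{X}$ containing the target feature, verify $\waxp(\fml{X})$ with one call to the polynomial-time test, and verify subset-minimality with at most $m$ further calls checking that $\waxp(\fml{X}\setminus\{i\})$ fails for each $i\in\fml{X}$. You are in fact slightly more careful than the paper in explicitly invoking the monotonicity of $\waxp$ (\cref{prop:xpmono}) to justify that single-feature removals suffice to certify minimality, a step the paper leaves implicit.
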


\begin{proof}
  We reason in terms of~\eqref{eq:axp}, but a similar argument could
  be used in the case of ~\eqref{eq:cxp}.\\
  To prove that a set $\fml{X}$ is an $\axp$, it suffices to prove
  that:
  \begin{enumerate}[start=1,label*={\arabic*.},nosep]
    \item $\waxp(\fml{X}) = \top$;
    \item $\forall i \in \fml{X}. \waxp(\fml{X} \setminus \{i\}) = \bot$, that is, $\fml{X}$ is subset-minimal.
  \end{enumerate}

\jnoteF{INVOKE MONOTONICITY}

%
  Now, since by hypothesis, we can decide~\eqref{eq:axp} in
  polynomial time, 
  then we can decide whether any guessed set
  $\fml{X}$ containing feature $t$ is an $\axp$ in polynomial-time, as
  follows.
  For step 1., check that $\fml{X}$ is a $\waxp$.
  For step 2., iteratively check,
  for each feature $i \in \fml{X}$, $\fml{X}\setminus\{i\}$ is not a $\waxp$.
  Clearly, given $\fml{X}$, this procedure runs in polynomial time.
  Thus FMP is in NP.
\end{proof}

Given~\cref{prop:fmpnp} (which offers an alternative proof to the
result in~\cite{hiims-kr21} for XpG's), we need now to devise ways to
exploit NP oracles for solving FMP. This is the topic of the next
sections.
%
%

\subsection{Deciding Membership Without Witnesses} \label{ssec:fmlnw}

As argued in the previous section, the proof of \cref{prop:fmpnp}
offers a solution for solving FMP in the case computing AXp's or CXp's
is in P.
As shown later, for classifiers for which there exists a propositional
encoding for deciding whether a set of features is a $\waxp$, one can
use~\cref{prop:fmpnp} to devise a propositional encoding for deciding
FMP.
However, a straightforward encoding of the approach outlined
in~\cref{prop:fmpnp} often requires large propositional formulas.
These formulas must encode one copy of the classifier to decide
whether a pick $\fml{X}$ of the features is a $\waxp$, and then $m$
copies (one for each feature) of the classifier to decide whether
$\fml{X}$ is indeed subset-minimal.
Observe that, since the size of $\fml{X}$ must be guessed, one must be
prepared to check $m$ features in the worst-case, and so the encoding
must indeed account for $m+1$ copies of the classifier

In this section, we propose an approach that leads to drastically
tighter encodings, premised on a simplification to the conditions
proposed in the proof of~\cref{prop:fmpnp}. (The conditions
of~\cref{prop:fmpnp} were also considered in earlier
work~\cite{hiims-kr21} for a concrete family of classifiers.)
Furthermore, one apparent downside of this alternative approach is
that the picked set of features $\fml{X}$ may not represent a witness
$\axp$. However, we also show how a witness $\axp$ can still be computed
from $\fml{X}$ in polynomial time. 

%

The approach proposed in this section hinges on the following result: 
\begin{proposition} \label{prop:fmpnp2}
  Let $\fml{X}\subseteq\fml{F}$ represent a pick of the features, such
  that, $\waxp(\fml{X})$ holds and $\waxp(\fml{X}\setminus\{t\})$ does
  not hold.
  Then, for any AXp $\fml{Z}\subseteq\fml{X}\subseteq\fml{F}$, it must
  be the case that $t\in\fml{Z}$.
\end{proposition}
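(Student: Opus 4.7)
The plan is to prove the statement by contradiction, leveraging the monotonicity of the $\waxp$ predicate established in \cref{prop:xpmono}.

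First I would suppose, for the sake of contradiction, that there exists an AXp $\fml{Z}$ with $\fml{Z}\subseteq\fml{X}\subseteq\fml{F}$ such that $t\notin\fml{Z}$. Since $t$ is absent from $\fml{Z}$ while $\fml{Z}\subseteq\fml{X}$, it immediately follows that $\fml{Z}\subseteq\fml{X}\setminus\{t\}$. This is the key structural observation that connects the hypothesized AXp to the set $\fml{X}\setminus\{t\}$ mentioned in the statement.

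Next I would invoke the definition of an AXp: any AXp is in particular a weak AXp, so $\waxp(\fml{Z})$ holds. Applying \cref{prop:xpmono} (monotonicity of $\waxp$) to the chain $\fml{Z}\subseteq\fml{X}\setminus\{t\}\subseteq\fml{F}$, we conclude that $\waxp(\fml{X}\setminus\{t\})$ also holds. This directly contradicts the hypothesis of the proposition, namely that $\waxp(\fml{X}\setminus\{t\})$ does \emph{not} hold. Hence no such $\fml{Z}$ can exist, and every AXp contained in $\fml{X}$ must include $t$.

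I do not expect any real obstacle here: the argument is a one-step application of monotonicity together with the trivial fact that an AXp is a weak AXp. The only subtlety worth stating explicitly is that the hypothesis $\waxp(\fml{X})$ guarantees that some AXp $\fml{Z}\subseteq\fml{X}$ actually exists (obtained by iteratively removing redundant features from $\fml{X}$), so the conclusion is not vacuous and in fact provides an effective route for extracting a witness AXp that contains the target feature $t$.
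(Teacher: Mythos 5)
Your proof is correct and follows essentially the same route as the paper's: assume $t\notin\fml{Z}$, observe $\fml{Z}\subseteq\fml{X}\setminus\{t\}$, and invoke the monotonicity of $\waxp$ (\cref{prop:xpmono}) together with the fact that an AXp is a weak AXp to contradict $\neg\waxp(\fml{X}\setminus\{t\})$. The paper phrases the monotonicity step via the auxiliary set $\fml{Z}'=\fml{Z}\cup(\fml{X}\setminus(\fml{Z}\cup\{t\}))$, which coincides with $\fml{X}\setminus\{t\}$ in the case you consider, so your version is just a slightly more direct writing of the same argument.
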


\begin{proof}
  Let $\fml{Z}\subseteq\fml{F}$ by any AXp such that
  $\fml{Z}\subseteq\fml{X}$. Clearly, by definition $\waxp(\fml{Z})$
  must hold.
  Moreover, from \cref{prop:xpmono}, it is also the case that
  $\waxp(\fml{Z}')$ must hold, with
  $\fml{Z}'=\fml{Z}\cup(\fml{X}\setminus(\fml{Z}\cup\{t\}))$, since
  $\fml{Z}\subseteq\fml{Z}'\subseteq\fml{F}$. However, by hypothesis,
  $\waxp(\fml{X}\setminus\{t\})$ does not hold; a contradiction.
\end{proof}

\jnoteF{COROLLARY?}

When compared with~\cref{prop:fmpnp}, \cref{prop:fmpnp2} offers a
simpler test to decide whether $t$ is included in $\axp$, in that it
suffices to guess a set $\fml{X}$ which is a $\waxp$, and such that
removing $t$ will cause $\fml{X}\setminus\{t\}$ not to be a $\waxp$.
An apparent drawback of this simpler test to decide $\axp$ membership is
that the guessed set $\fml{X}$ need \emph{not} represent an $\axp$. 

Nevertheless, we can use \cref{prop:fmpnp2} to devise an efficient
algorithm for producing a witness of $t$ being included in some $\axp$.
Let $\fml{X}\subseteq\fml{F}$ be some guessed set which satisfies the
conditions of \cref{prop:fmpnp2}. Because the working assumption is
that the classifier is such that an $\axp$ can be computed in
polynomial-time, and since \emph{any} $\axp$ contained in $\fml{X}$ must
include $t$, then we can simply extract \emph{any} $\axp$ (in polynomial
time) starting from set $\fml{X}$ (which can be viewed as a seed in
algorithms proposed in earlier
work~\cite{hiims-kr21,hiicams-aaai22}.).

Since the witness $\axp$ is computed in a second step, this approach is
referred to as the \emph{two-step method}, in contrast with the
approach detailed in the proof of~\cref{prop:fmpnp}, which we refer to
as the \emph{one-step method}.
As shown in ~\cref{sec:res}, very significant performance gains can be
obtained by using the two-step method.

\section{SAT encodings of FMP for SDDs and XpGs}

This section proposes solutions for deciding FMP in the case of SDDs
and also in the case of XpG's.
The proposed propositional encoding follows the approach described in
the proofs of~\cref{prop:fmpnp} and~\cref{prop:fmpnp2}.
\paragraph{One-step method.}
This approach is based on the proof of~\cref{prop:fmpnp}.
The whole problem is encoded into $m+1$ replicas (where $m = |\fml{F}|$),
such that the 0-th replica asserts that there is a $\waxp$ $\fml{X}'$,
and each $k$-th replica asserts that if feature $k$ is included in the candidate $\fml{X}'$,
then $k$ cannot be removed from $\fml{X}'$.
Apparently, as $m+1$ replicas are required, 
this encoding is polynomial on the
number of features and the size of the classifier's representation.
What's more, it can be expected that
for SDD/XpG with a large number of features and/or number of nodes,
the size of resulting propositional encoding can be unmanageable, reaching the
limits of the scalability of SAT solvers.
\paragraph{Two-step method.}
%
%
In this approach, we seek to identify a set of features
$\fml{X}'$ that is a $\waxp$ and that contains the target feature
$t$.
More importantly, and given~\cref{prop:fmpnp2}, it is also the case
that such a set $\fml{X}'$ ensures that $t$ \emph{must} be
included in \emph{any} $\axp$ that is contained in $\fml{X}'$.
Clearly, this can be achieved with only 0-th replica and $t$-th replica.
The encoding is polynomial on the size of classifier's representation,
and in practice it scales better than the \osm.
After deciding whether there exists a $\waxp$ $\fml{X}'$
containing $t$, we can use any existing algorithm~\cite{hiims-kr21,hiicams-corr21}
for extracting one $\axp$ starting from $\fml{X}'$.

\begin{table*}[th]
	\caption{Encoding for SDD to decide whether there exists an $\axp$
	that includes feature $t$} \label{tab:enc_sdd}
	\begin{center}
	\scalebox{1.0}{
	\renewcommand{\arraystretch}{1.275}
	\renewcommand{\tabcolsep}{0.5em}
	\begin{tabular}{|c|c|c|c|} \toprule
	General Conditions on Indeces & Specific Conditions & Constraints & Fml~\# \\
	\midrule
        \multirow{8}{*}{$0\le{k}\le{m},1\le{i}\le{m},1\le{j}\le|\mbb{C}|$}
        & $\ter(j), \lfeat(j,i), \satisfy(\lval(j),v_i)$ &
        $n^k_j$
        & \refstepcounter{tableeqn} {\small(\thetableeqn)}\label{eq101}
        \\[1.25pt]
        \cline{2-4}
        & $\ter(j),\lfeat(j,i),\neg\satisfy(\lval(j),{v_i}), i = k$ &
        $n^k_j$
        & \refstepcounter{tableeqn} {\small(\thetableeqn)}\label{eq102}
        \\[1.25pt]
        \cline{2-4}
	& $\ter(j),\lfeat(j,i),\neg\satisfy(\lval(j),{v_i}), i \neq k$ &
        $n^k_j \lequiv \neg{s_i}$
        & \refstepcounter{tableeqn} {\small(\thetableeqn)}\label{eq103}
        \\[1.25pt]
        \cline{2-4}
        & $\dec(j)$  &
        $n^k_j \lequiv \biglor_{l\in\childn(j)} n^k_l$
        & \refstepcounter{tableeqn} {\small(\thetableeqn)}\label{eq104}
        \\[1.25pt]
        \cline{2-4}
        & $\ele(j)$  &
        $n^k_j \lequiv \bigland_{l\in\childn(j)} n^k_l$
        & \refstepcounter{tableeqn} {\small(\thetableeqn)}\label{eq105}
        \\[1.25pt]
        \cline{2-4}
        & $\kappa(\mbf{v})=\bot$ &
        $\neg n^0_1$
        & \refstepcounter{tableeqn} {\small(\thetableeqn)}\label{eq106}
        \\
        \cline{2-4}
        & $\kappa(\mbf{v})=\bot$ &
        $s_i \lequiv n^i_1$
        & \refstepcounter{tableeqn} {\small(\thetableeqn)}\label{eq107}
        \\
        \cline{2-4}
        & &
        $s_t$
        & \refstepcounter{tableeqn} {\small(\thetableeqn)}\label{eq108}
        \\
	\bottomrule
	\end{tabular}
	}
	\end{center}
\end{table*}

%
\begin{table*}[th] \footnotesize
	\caption{Encoding for XpG to decide whether there exists an $\axp$
	that includes feature $t$} \label{tab:enc_xpg}
	\begin{center}
	\scalebox{1.0}{
	\renewcommand{\arraystretch}{1.275}
	\renewcommand{\tabcolsep}{0.5em}
	\begin{tabular}{|c|c|c|c|} \toprule
	General Conditions on Indeces & Specific Conditions & Constraints & Fml~\# \\
	\midrule
        \multirow{8}{*}{$0\le{k}\le{m},1\le{i}\le{m},1\le{j}\le|\fml{D}|$}
        & $\neg \ter(j), k = 0$ &
        $
        n^0_j\lequiv
	\biglor_{\substack{p\in\parent(r)\\\land\neg \alpha_{E}(p,j)}}\left(n^0_p\land\neg{s_i}\right)
	\biglor_{\substack{p\in\parent(r)\\\land \alpha_{E}(p,j)}}n^0_p
	$
        & \refstepcounter{tableeqn} {\small(\thetableeqn)}\label{eq201}
        \\[1.25pt]
        \cline{2-4}
        & $\neg \ter(j)$, $k > 0$ &
        $
        n^k_j\lequiv
	\biglor_{\substack{p\in\parent(r)\\\land\neg \alpha_{E}(p,j)\land i \neq k}}\left(n^k_p\land\neg{s_i}\right)
	\biglor_{\substack{p\in\parent(r)\\\land (\alpha_{E}(p,j)\lor i=k)}}n^k_p
	$
        & \refstepcounter{tableeqn} {\small(\thetableeqn)}\label{eq202}
        \\[1.25pt]
        \cline{2-4}
        & &
        $
        \sigma^k_{\fml{D}}\lequiv
	\bigwedge\nolimits_{j\in{T_{\fml{D}}}\land \neg \alpha_{V}(j)}\neg n^k_j
	$
        & \refstepcounter{tableeqn} {\small(\thetableeqn)}\label{eq203}
        \\[1.25pt]
        \cline{2-4}
        & &
        $n^k_1$
        & \refstepcounter{tableeqn} {\small(\thetableeqn)}\label{eq204}
        \\[1.25pt]
        \cline{2-4}
        & &
        $\sigma_{\fml{D}}^0$
        & \refstepcounter{tableeqn} {\small(\thetableeqn)}\label{eq205}
        \\
        \cline{2-4}
	& &
        $s_i \lequiv \neg \sigma_{\fml{D}}^i$
        & \refstepcounter{tableeqn} {\small(\thetableeqn)}\label{eq206}
        \\
        \cline{2-4}
        & &
        $s_t$
        & \refstepcounter{tableeqn} {\small(\thetableeqn)}\label{eq207}
        \\
	\bottomrule
	\end{tabular}
	}
	\end{center}
\end{table*}

\subsection{Feature Membership for SDD's}
This section details, in the case of SDDs, the propositional encoding
for deciding whether a subset $\fml{X} \subseteq \fml{F}$ is a $\waxp$.
Note that this encoding is not applicable to instances predicted to $\top$.
To present the constraints included in this encoding,
we need to introduce some auxiliary boolean variables and predicates.
\begin{enumerate}[nosep]
\item $s_i$, $1 \le i \le m$. $s_i$ is a selector
    such that $s_i = 1$ iff feature $i$ is included in $\fml{X}$.
    Moreover, in the context of finding one $\axp$, $s_i = 1$ also means that
    feature $i$ must be fixed to its given value $v_i$,
    while $s_i = 0$ means that feature $i$ can take any value from its domain.
\item $n^k_j$, $1 \le j \le |\mbb{C}|$ and $0 \le k \le m$. $n^k_j$ is 
    the indicator of a node $j$ of SDD $\mbb{C}$ for replica $k$.
    The indicator for the root node of $k$-th replica is $n^k_1$.
    Moreover, the semantics of  $n_j^k$
    is $n_j^k = 1$ iff the sub-SDD rooted at node $j$ in $k$-th replica
    is consistent, otherwise inconsistent.
\item
    $\ter(j) = 1$ if the node $j$ is a terminal node.
\item
    $\ele(j) = 1$ if the node $j$ is an element.
\item
    $\dec(j) = 1$ if the node $j$ is a decision node.
\item
    $\lfeat(j,i) = 1$ if the terminal node $j$ labeled with feature $i$.
\item
    $\satisfy(\lval(j),v_i) = 1$
    if for terminal node $j$, its the literal on feature $i$ is satisfied by the value $v_i$.
\end{enumerate}
The encoding is summarized in \autoref{tab:enc_sdd}.
%
%
As literals are terminal SDDs, the values of the selector variables
only affect the values of the indicator variables of terminal nodes. 
%
Constraint \eqref{eq101} states that
for any terminal node $j$ whose literal is consistent with
the given instance, its indicator $n^k_j$ is always consistent
regardless the value of $s_i$.
On the contrary,
constraint \eqref{eq103} states that
for any terminal node $j$ whose literal is inconsistent with
the given instance, its indicator $n^k_j$ is consistent iff feature $i$ is not picked,
in other words, feature $i$ can take any value.
Because replica $k$ ($k > 0$) is used to check the necessity of
including feature $k$ in $\fml{X}$,
we assume the value of the local copy of selector $s_k$ is 0 in replica $k$.
In this case, as defined in constraint \eqref{eq102}, 
even though terminal node $j$ labeled feature $k$ has a literal
that is inconsistent with the given instance, its indicator $n^k_j$ is
consistent.
Constraint \eqref{eq104} defines
the indicator for an arbitrary decision node $j$.
Constraint \eqref{eq105} defines
the indicator for an arbitrary element node $j$
(this constraint will be simplified when the sub is $\top$ or $\bot$).
Together, these constraints declare how the consistency is propagated
through the entire SDD.
Constraint \eqref{eq106} states that
the prediction of the SDD classifier $\mbb{C}$ remains $\bot$
since the selected features form a $\waxp$.
Constraint \eqref{eq107} states that
if feature $i$ is selected, then removing it
will change the prediction of $\mbb{C}$.
Finally, constraint \eqref{eq108} indicates that feature $t$ must be
included in $\fml{X}$.

\begin{example}
	For the SDD in \autoref{fig:sdd_example},
	we summarize the propositional encoding for deciding whether
        there is an $\axp$ containing feature \tbf{M}ale.
	We have selectors $\mbf{s} = \{s_P, s_Y, s_M, s_W\}$,
	If \osm is adopted, then the encoding is as follows
	(otherwise if \tsm is adopted, then formulas 0. and 3. are enough
	to check the existence of a $\waxp$):
	\begin{enumerate}[nosep]
		\item[0.] $
				(n^0_1 \lequiv n^0_2 \lor n^0_3 \lor n^0_4) \land
				(n^0_2 \lequiv n^0_5) \land
				(n^0_3 \lequiv n^0_6) \land
				(n^0_4 \lequiv n^0_7) \land
				(n^0_5 \lequiv n^0_8) \land
				(n^0_6 \lequiv n^0_{10}) \land
				(n^0_7 \lequiv n^0_{11}) \land
				(n^0_8 \lequiv \neg s_P) \land
				(n^0_{10} \lequiv \neg s_P \land \neg s_Y) \land
				(n^0_{11} \lequiv \neg s_M) \land
				(\neg n^0_1) \land (s_M)
			$
		\item $
				(n^1_1 \lequiv n^1_2 \lor n^1_3 \lor n^1_4) \land
				(n^1_2 \lequiv n^1_5) \land
				(n^1_3 \lequiv n^1_6) \land
				(n^1_4 \lequiv n^1_7) \land
				(n^1_5 \lequiv n^1_8) \land
				(n^1_6 \lequiv n^1_{10}) \land
				(n^1_7 \lequiv n^1_{11}) \land
				(n^1_8 \lequiv \neg \bot) \land
				(n^1_{10} \lequiv \neg \bot \land \neg s_Y) \land
				(n^1_{11} \lequiv \neg s_M) \land
				(s_P \lequiv n^1_1)
			$
		\item $
				(n^2_1 \lequiv n^2_2 \lor n^2_3 \lor n^2_4) \land
				(n^2_2 \lequiv n^2_5) \land
				(n^2_3 \lequiv n^2_6) \land
				(n^2_4 \lequiv n^2_7) \land
				(n^2_5 \lequiv n^2_8) \land
				(n^2_6 \lequiv n^2_{10}) \land
				(n^2_7 \lequiv n^2_{11}) \land
				(n^2_8 \lequiv \neg s_P) \land
				(n^2_{10} \lequiv \neg s_P \land \neg \bot) \land
				(n^2_{11} \lequiv \neg s_M) \land
				(s_Y \lequiv n^2_1)
			$
		\item $
				(n^3_1 \lequiv n^3_2 \lor n^3_3 \lor n^3_4) \land
				(n^3_2 \lequiv n^3_5) \land
				(n^3_3 \lequiv n^3_6) \land
				(n^3_4 \lequiv n^3_7) \land
				(n^3_5 \lequiv n^3_8) \land
				(n^3_6 \lequiv n^3_{10}) \land
				(n^3_7 \lequiv n^3_{11}) \land
				(n^3_8 \lequiv \neg s_P) \land
				(n^3_{10} \lequiv \neg s_P \land \neg s_Y) \land
				(n^3_{11} \lequiv \neg \bot) \land
				(s_M \lequiv n^3_1)
			$
		\item $
				(n^4_1 \lequiv n^4_2 \lor n^4_3 \lor n^4_4) \land
				(n^4_2 \lequiv n^4_5) \land
				(n^4_3 \lequiv n^4_6) \land
				(n^4_4 \lequiv n^4_7) \land
				(n^4_5 \lequiv n^4_8) \land
				(n^4_6 \lequiv n^4_{10}) \land
				(n^4_7 \lequiv n^4_{11}) \land
				(n^4_8 \lequiv \neg s_P) \land
				(n^4_{10} \lequiv \neg s_P \land \neg s_Y) \land
				(n^4_{11} \lequiv \neg s_M) \land
				(s_W \lequiv n^4_1)
			$
	\end{enumerate}
	%
	Solving these formulas,
	we find that for applicant Ella, there is an $\axp$ \{$\neg$ P, $\neg$ M\} containing feature \tbf{M}ale,
	so the classifier is biased.
\end{example}

\subsection{Feature Membership for XpG's}
Similarly to the previous section, this section details the
propositional encoding for deciding whether a subset $\fml{X} \subseteq \fml{F}$ is a $\waxp$,
but considers instead the case of XpG's.
The encoding is based on the evaluation function $\sigma_{\fml{D}}$.
The boolean variables $s_i$ of XpG's also play the role of selectors,
namely, $s_i = 1$ if feature $i$ is included in $\fml{X}$
(meanwhile, $s_i = 1$ also means that feature $i$ must be fixed to its given value $v_i$).
%
%

All the constraints are summarized in \autoref{tab:enc_xpg}.
Moreover, to simplify the encoding,
for an arbitrary node $j$, we replace the notation of 
its auxiliary \emph{activation} function
$\varepsilon(\mbf{s},j)$ by $n_j$ (i.e. $\varepsilon(\mbf{s},j) \lequiv n_j$)
and omit the assignment $\mbf{s}$ to $S$.
Constraints \eqref{eq201}, \eqref{eq203} and \eqref{eq204}
together form the encoding of an evaluation function $\sigma_{\fml{D}}$.
Replica $k$ ($k > 0$) is used to check feature $k$.
Thus for a non-terminal node $j$ of this replica $k$,
its auxiliary \emph{activation} function is defined as constraint \eqref{eq202}.
Similar to the encoding for SDDs,
constraint \eqref{eq205} states that
the prediction of the original classifier $\mbb{C}$ remains unchanged.
Constraint \eqref{eq206} states that
if feature $i$ is selected, then removing it
will change the prediction of $\mbb{C}$.
Finally, constraint \eqref{eq207} indicates that feature $t$ must be
included in $\fml{X}$.

\begin{example}
	For the XpG in \autoref{fig:xpg_example},
	we summarize the propositional encoding for deciding whether
        there is an $\axp$ containing feature \tbf{M}ale.
	We have selectors $\mbf{s} = \{s_P, s_Y, s_M, s_W\}$.
	If \osm is adopted, then the encoding is as follow:
	\begin{enumerate}[nosep]
		\item[0.] $
				[n^0_1] \land
				[n^0_2 \lequiv n^0_1] \land
				[n^0_3 \lequiv n^0_1 \land \neg s_P] \land
				[n^0_4 \lequiv (n^0_2 \land \neg s_M) \lor (n^0_3 \land \neg s_Y)] \land
				[n^0_6 \lequiv n^0_3 \lor n^0_4] \land
				[\sigma^0_\fml{D} \lequiv \neg n^0_6] \land [\sigma^0_\fml{D}] \land [s_M]
			$
		\item $
				[n^1_1] \land
				[n^1_2 \lequiv n^1_1] \land
				[n^1_3 \lequiv n^1_1 \land \neg \bot] \land
				[n^1_4 \lequiv (n^1_2 \land \neg s_M) \lor (n^1_3 \land \neg s_Y)] \land
				[n^1_6 \lequiv n^1_3 \lor n^1_4] \land
				[\sigma^1_\fml{D} \lequiv \neg n^1_6] \land [s_P \lequiv \neg \sigma^1_\fml{D}]
			$
		\item $
				[n^2_1] \land
				[n^2_2 \lequiv n^2_1] \land
				[n^2_3 \lequiv n^2_1 \land \neg s_P] \land
				[n^2_4 \lequiv (n^2_2 \land \neg s_M) \lor (n^2_3 \land \neg \bot)] \land
				[n^2_6 \lequiv n^2_3 \lor n^2_4] \land
				[\sigma^2_\fml{D} \lequiv \neg n^2_6] \land [s_Y \lequiv \neg \sigma^2_\fml{D}]
			$
		\item $
				[n^3_1] \land
				[n^3_2 \lequiv n^3_1] \land
				[n^3_3 \lequiv n^3_1 \land \neg s_P] \land
				[n^3_4 \lequiv (n^3_2 \land \neg \bot) \lor (n^3_3 \land \neg s_Y)] \land
				[n^3_6 \lequiv n^3_3 \lor n^3_4] \land
				[\sigma^3_\fml{D} \lequiv \neg n^3_6] \land [s_M \lequiv \neg \sigma^3_\fml{D}]
			$
		\item $
				[n^4_1] \land
				[n^4_2 \lequiv n^4_1] \land
				[n^4_3 \lequiv n^4_1 \land \neg s_P] \land
				[n^4_4 \lequiv (n^4_2 \land \neg s_M) \lor (n^4_3 \land \neg s_Y)] \land
				[n^4_6 \lequiv n^4_3 \lor n^4_4] \land
				[\sigma^4_\fml{D} \lequiv \neg n^4_6] \land [s_W \lequiv \neg \sigma^4_\fml{D}]
			$
	\end{enumerate}
	Likewise, solving these formulas will return us an $\axp$ \{$\neg$ P, $\neg$ M\} containing feature \tbf{M}ale.
\end{example}

\begin{table*}[h]
\centering
\footnotesize
\setlength{\tabcolsep}{3pt}
\begin{tabular}{cccc|cccccc|ccccc}
\multirow{3}{*}{\tbf{Name}} & \multirow{3}{*}{\tbf{\#TI}} & \multicolumn{2}{c|}{\multirow{2}{*}{\tbf{SDD}}} & \multicolumn{6}{c|}{\osm}                                                                                                          & \multicolumn{5}{c}{\tsm}                                                                       \\
                               &                                & \multicolumn{2}{c|}{}        & \multirow{2}{*}{\tbf{Succ (Test)}} & \multirow{2}{*}{\tbf{Yes\%}} & \multicolumn{2}{c}{\tbf{CNF}}          & \multicolumn{2}{c|}{\tbf{Runtime (s)}} & \multirow{2}{*}{\tbf{Yes\%}} & \multicolumn{2}{c}{\tbf{CNF}}          & \multicolumn{2}{c}{\tbf{Runtime (s)}} \\
                               &                                & \tbf{\#F}                 & \tbf{\#N}        &                                  &                                 & \tbf{Avg. \#var} & \tbf{Avg. \#cls} & \tbf{Max}      & \tbf{Avg.}     &                                 & \tbf{Avg. \#var} & \tbf{Avg. \#cls} & \tbf{Max}     & \tbf{Avg.}     \\ \hline
s1196                          & 100                            & 560                          & 2230                & 100                              & 63                              & 493536              & 576643              & 8.3               & 6.5               & 63                              & 3980                & 5579                & 0.1              & 0.1               \\
s1423                          & 100                            & 748                          & 3493                & 100                              & 60                              & 1285480             & 1538868             & 30.6              & 19.1              & 60                              & 6636                & 9831                & 0.2              & 0.1               \\
s1488                          & 100                            & 667                          & 3248                & 100                              & 65                              & 1118950             & 1439323             & 17.0              & 15.5              & 65                              & 6201                & 9458                & 0.1              & 0.1               \\
s1494                          & 100                            & 661                          & 2644                & 100                              & 54                              & 683740              & 675799              & 10.1              & 8.9               & 54                              & 4715                & 6184                & 0.1              & 0.1               \\
s400                           & 100                            & 189                          & 2150                & 100                              & 95                              & 532189              & 1362872             & 84.6              & 24.6              & 95                              & 5789                & 14348               & 0.6              & 0.2               \\
s420.1                         & 100                            & 252                          & 2525                & 100                              & 100                             & 750144              & 1789977             & 48.9              & 22.6              & 100                             & 6180                & 14152               & 0.2              & 0.2               \\
s444                           & 100                            & 205                          & 2586                & 100                              & 96                              & 731587              & 1914949             & 170.2             & 48.3              & 96                              & 7321                & 18726               & 2.9              & 0.2               \\
s510                           & 100                            & 236                          & 4180                & 100                              & 100                             & 1290701             & 3196895             & 108.9             & 39.4              & 100                             & 11126               & 26980               & 0.4              & 0.3               \\ \hline
s526                           & 100                            & 217                          & 3451                & 100                              & 100                             & 1019367             & 2705600             & 309.8             & 82.6              & 100                             & 9567                & 24824               & 0.4              & 0.2               \\
s526n                          & 100                            & 218                          & 5547                & 99 (100)                      & 99                              & 2149046             & 7092974             & 1800              & 338.6             & 100                             & 19842               & 64778               & 1.0              & 0.6               \\
s641                           & 100                            & 433                          & 2044                & 100                              & 58                              & 441190              & 504836              & 8.9               & 6.2               & 58                              & 3841                & 5576                & 0.1              & 0.0               \\
s713                           & 100                            & 447                          & 2050                & 100                              & 56                              & 470931              & 517371              & 8.8               & 6.7               & 56                              & 3989                & 5695                & 0.1              & 0.1               \\
s820                           & 100                            & 312                          & 1409                & 100                              & 60                              & 213612              & 252735              & 3.8               & 2.9               & 60                              & 2712                & 3970                & 0.1              & 0.0               \\
s832                           & 100                            & 310                          & 1420                & 100                              & 51                              & 213123              & 212591              & 3.6               & 2.9               & 51                              & 2706                & 3744                & 0.1              & 0.0               \\
s838.1                         & 100                            & 512                          & 5341                & 100                              & 100                             & 3144176             & 7526738             & 1390.6            & 212.5             & 100                             & 12768               & 29346               & 3.5              & 1.0               \\
s953                           & 100                            & 417                          & 1692                & 100                              & 39                              & 285832              & 210860              & 4.2               & 3.5               & 39                              & 3059                & 3694                & 0.1              & 0.0               \\
s344                           & 100                            & 184                          & 2581                & 100                              & 100                             & 803454              & 2229067             & 322.9             & 71.1              & 100                             & 8868                & 24100               & 0.3              & 0.2               \\
s499                           & 100                            & 175                          & 2282                & 100                              & 100                             & 507407              & 1380722             & 25.7              & 11.8              & 100                             & 5939                & 15692               & 0.3              & 0.2               \\
s635                           & 100                            & 320                          & 2972                & 100                              & 100                             & 1217552             & 3042761             & 183.5             & 45.8              & 100                             & 7904                & 18960               & 1.3              & 0.4               \\
s938                           & 100                            & 512                          & 5615                & 100                              & 99                              & 3258575             & 7862753             & 443.3             & 159.5             & 99                              & 13214               & 30656               & 2.8              & 1.1               \\
s967                           & 100                            & 416                          & 2292                & 100                              & 72                              & 555693              & 839976              & 10.9              & 8.3               & 72                              & 4576                & 7666                & 0.1              & 0.1               \\
s991                           & 100                            & 603                          & 2799                & 100                              & 74                              & 1001100             & 1511707             & 27.4              & 16.3              & 74                              & 5613                & 9100                & 0.2              & 0.1               \\ \hline
Accidents                      & 100                            & 415                          & 8863                & 23 (32)                 & 23                              & 5428799 & 16280994               & \multicolumn{2}{c|}{Timeout} & 97                              & 26513               & 78276               & 56.4             & 3.5               \\
Audio                          & 100                            & 272                          & 7224                & 23 (34)               & 23                              & 4214846 & 13782407            & \multicolumn{2}{c|}{Timeout} & 88                              & 31148               & 100972              & 663.1            & 22.0              \\
DNA                            & 100                            & 513                          & 8570                & 5 (18)                 & 5                               & 7361507 & 23460504             & \multicolumn{2}{c|}{Timeout} & 91                              & 29155               & 91288               & 86.3             & 11.0              \\
Jester                         & 100                            & 254                          & 7857                & 19 (35)          & 19                              & 4557614 & 15492017             & \multicolumn{2}{c|}{Timeout} & 85                              & 35998               & 121508              & 362.1            & 22.7              \\
KDD                            & 100                            & 306                          & 8109                & 31 (38)                    & 31                   & 4006042 & 12813875         & \multicolumn{2}{c|}{Timeout} & 99                              & 26402               & 83480               & 111.2            & 2.8               \\
Mushrooms                      & 100                            & 248                          & 7096                & 53 (59)          & 53                              & 2941685 & 10222697    & \multicolumn{2}{c|}{Timeout} & 91                              & 23874               & 82112               & 266.3            & 15.8              \\
Netflix                        & 100                            & 292                          & 7039                & 34 (41)           & 34                            & 3696194 & 12206675  & \multicolumn{2}{c|}{Timeout} & 94                              & 25520               & 83324               & 105.7            & 4.2               \\
NLTCS                      & 100                            & 183                          & 6661                & 100                              & 100                             & 1806511             & 5381266             & 816.4             & 166.4             & 100                             & 19817               & 58494               & 1.4              & 0.5               \\
Plants                         & 100                            & 244                          & 6724                & 20 (33)         & 20                             & 3076464 & 10385552  & \multicolumn{2}{c|}{Timeout} & 97                              & 25356               & 84782               & 950.7            & 20.6              \\
RCV-1                           & 100                            & 410                          & 9472                &  10 (24)                        & 10              & 6787664 & 21063341           & \multicolumn{2}{c|}{Timeout} & 90                              & 33438               & 102500              & 153.6            & 11.2              \\
Retail                         & 100                            & 341                          & 3704                & 100                              & 87                              & 1754801             & 4846142             & 909.8             & 207.6             & 87                              & 10601               & 28342               & 1.8              & 1.1              
\end{tabular}
\caption{Solving FMP for SDDs with two methods.
First column reports the name of each test case.
Column {\bf \#TI} reports the number of tested instances.
Sub-column {\bf \#F} reports the number of features that appear in the SDD. 
Sub-column {\bf \#N} reports the number of nodes of a SDD.
Column {\bf Succ (Test)} shows the number of solved queries,
inside the parentheses is the number of tested queries.
Column {\bf Yes\%} shows the percentage of answering `Yes' to the queries.
Sub-Columns {\bf Avg. \#var} and {\bf Avg. \#cls} show, respectively,
the average number of variables and clauses in a CNF encoding.
Sub-columns {\bf Max} and {\bf Avg.} 
reports, respectively, maximal and average time in seconds
for answering a query.
}
\label{tab:sdd}
\end{table*}
\begin{figure*}[h]
\centering
\includegraphics[width=0.55\textwidth]{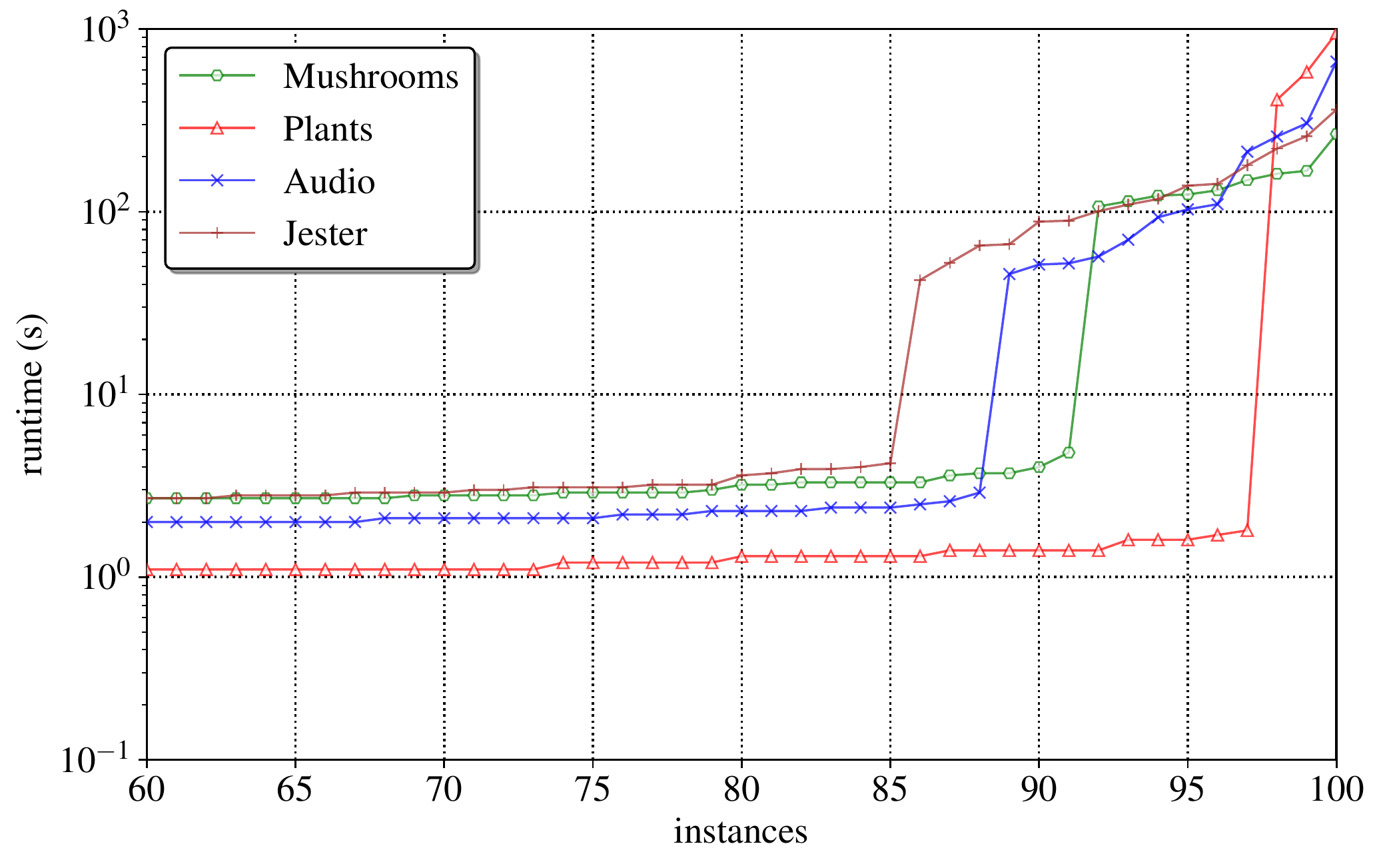}
\caption{Running times of Audio, Jester, Mushrooms and Plants.}
\label{fig:runtime}
\end{figure*}
\begin{table*}[h]
\centering
\footnotesize
\setlength{\tabcolsep}{3pt}
\begin{tabular}{cccc|ccccc|ccccc}
\multirow{3}{*}{\tbf{Name}} & \multirow{3}{*}{\tbf{\#TI}} & \multicolumn{2}{c|}{\multirow{2}{*}{\tbf{XpG}}} & \multicolumn{5}{c|}{\osm}                                                                                    & \multicolumn{5}{c}{\tsm}                                                                  \\
                               &                                & \multicolumn{2}{c|}{}                              & \multirow{2}{*}{\tbf{Yes\%}} & \multicolumn{2}{c}{\tbf{CNF}}          & \multicolumn{2}{c|}{\tbf{Runtime (s)}} & \multirow{2}{*}{\tbf{Yes\%}} & \multicolumn{2}{c}{\tbf{CNF}} & \multicolumn{2}{c}{\tbf{Runtime (s)}} \\
                               &                                & \tbf{\#F}                    & \tbf{\#N}                      &                                  & \tbf{Avg. \#var} & \tbf{Avg. \#cls} & \tbf{Max}      & \tbf{Avg.}     &                         & \tbf{Avg. \#var} & \tbf{Avg. \#cls} & \tbf{Max}          & \tbf{Avg.}         \\ \hline
adult                          & 100                            & 13                      & 299                      & 59                               & 5851                & 6431                & 0.2               & 0.1               & 59                      & 847                 & 1446       & 0.1                   & 0.0          \\
letter                         & 100                            & 16                      & 949                      & 46                               & 31620               & 30424               & 0.8               & 0.4               & 46                      & 3734                & 6153       & 0.2                   & 0.1          \\
mfeat\_fourier                 & 100                            & 42                      & 265                      & 18                               & 21676               & 21666               & 0.3               & 0.2               & 18                      & 1048                & 1709       & 0.1                   & 0.0          \\
mfeat\_karhunen                & 100                            & 33                      & 281                      & 34                               & 18202               & 18099               & 0.3               & 0.2               & 34                      & 1102                & 1833       & 0.1                   & 0.0          \\
mfeat\_zernike                 & 100                            & 32                      & 299                      & 29                               & 18824               & 22363               & 0.3               & 0.2               & 29                      & 1171                & 2010       & 0.0                   & 0.0          \\
satimage                       & 100                            & 36                      & 355                      & 43                               & 24045               & 31603               & 0.3               & 0.3               & 43                      & 1334                & 2429       & 0.0                   & 0.0          \\
twonorm                        & 100                            & 20                      & 439                      & 87                               & 13848               & 22148               & 0.2               & 0.2               & 87                      & 1337                & 2623       & 0.1                   & 0.0          \\
waveform\_40                   & 100                            & 38                      & 431                      & 42                               & 28100               & 41340               & 0.4               & 0.3               & 42                      & 1477                & 2742       & 0.1                   & 0.0          \\ \hline
flat30-3                       & 100                            & 90                      & 10012                    & 100                              & 1876874             & 3372082             & 49.9              & 31.1              & 100                     & 41338               & 94264      & 0.9                   & 0.7          \\
flat30-29                      & 100                            & 90                      & 8745                     & 100                              & 1627989             & 2741722             & 42.2              & 25.7              & 100                     & 35868               & 81473      & 0.6                   & 0.6          \\
flat30-33                      & 100                            & 90                      & 9004                     & 100                              & 1690324             & 2915374             & 99.8              & 29.6              & 100                     & 37238               & 84977      & 1.0                   & 0.6          \\
flat30-36                      & 100                            & 90                      & 13015                    & 100                              & 2452267             & 4240009             & 73.0              & 42.5              & 100                     & 53984               & 123504     & 1.1                   & 0.9          \\
flat30-37                      & 100                            & 90                      & 15681                    & 100                              & 2949309             & 5339047             & 81.4              & 50.8              & 100                     & 64908               & 148503     & 1.8                   & 1.1          \\
flat30-56                      & 100                            & 90                      & 12597                    & 100                              & 2357809             & 4108588             & 73.8              & 41.4              & 100                     & 51908               & 118295     & 1.1                   & 0.9          \\
flat30-58                      & 100                            & 90                      & 7724                     & 100                              & 1448537             & 2617239             & 50.3              & 25.0              & 100                     & 31924               & 72799      & 0.7                   & 0.5          \\
flat30-61                      & 100                            & 90                      & 10076                    & 100                              & 1879058             & 3303417             & 48.3              & 30.1              & 100                     & 41386               & 94174      & 0.9                   & 0.7          \\
flat30-66                      & 100                            & 90                      & 10686                    & 100                              & 2020927             & 3672495             & 74.6              & 35.3              & 100                     & 44504               & 101979     & 1.1                   & 0.8          \\
flat30-71                      & 100                            & 90                      & 11594                    & 100                              & 2166891             & 3832417             & 106.0             & 41.4              & 100                     & 47712               & 108707     & 1.0                   & 0.8          \\
flat30-81                      & 100                            & 90                      & 14464                    & 100                              & 2719079             & 4743534             & 67.9              & 44.0              & 100                     & 59848               & 136815     & 1.5                   & 1.0          \\
flat30-86                      & 100                            & 90                      & 7930                     & 99                               & 1496949             & 2607069             & 387.9             & 30.0              & 99                      & 32988               & 75427      & 3.7                   & 0.6          \\
flat30-88                      & 100                            & 90                      & 21816                    & 100                              & 4101824             & 7201944             & 101.6             & 67.1              & 100                     & 90238               & 206328     & 2.6                   & 1.6          \\
flat30-96                      & 100                            & 90                      & 9265                     & 100                              & 1747199             & 3173768             & 81.4              & 32.1              & 100                     & 38488               & 87948      & 1.0                   & 0.7         
\end{tabular}
\caption{Solving FMP for XpGs with two methods.
The columns hold the same meaning as described in the caption of \cref{tab:sdd}.
}
\label{tab:xpg}
\end{table*}

\section{Preliminary Experimental Results} \label{sec:res}

This section presents preliminary experimental results on assessing
the practical efficiency of the proposed methods.
The experiments were performed on a MacBook Pro with a 6-Core Intel
Core~i7 2.6~GHz processor with 16~GByte RAM, running macOS Monterey.

\paragraph{Classifiers and Benchmarks.}
We consider SDD, DT, and OBDD classifiers 
(DTs and OBDDs were then mapped into XpGs).
For SDDs,
we selected 16 circuits from ISCAS89 suite,
6 circuits from ISCAS93 suite~\footnote{http://www.cril.univ-artois.fr/KC/benchmarks.html},
and 11 datasets from
Density Estimation Benchmark Datasets\footnote{https://github.com/UCLA-StarAI/Density-Estimation-Datasets}.
~\cite{lowd2010learning,van2012markov,larochelle2011neural}.
22 circuits were compiled into SDDs by using the
well-known SDD package\footnote{http://reasoning.cs.ucla.edu/sdd/}.
11 datasets were used to learn SDD via using
LearnSDD\footnote{https://github.com/ML-KULeuven/LearnSDD}~\cite{bekker2015tractable}
(with parameter \textit{maxEdges=20000}).
The obtained SDDs were used as binary classifiers (albeit the selected
circuits/datasets might not originally target classification tasks.)
For XpG,
we selected 8 classification datasets from the
Penn Machine Learning Benchmarks~\cite{Olson2017PMLB}, 
and 14 test cases from a graph colouring problems benchmark
flat-30-60~\footnote{https://www.cs.ubc.ca/~hoos/SATLIB/benchm.html}
(the rest test cases are filtered out since their size are below 7500 nodes).
8 datasets were used to learn DTs by using Orange3~\cite{JMLR:demsar13a}.
14 test cases were compiled into OBDDs by using
\texttt{dd}~\footnote{https://github.com/tulip-control/dd} package
which integrated well-known
CUDD~\footnote{https://github.com/ivmai/cudd}
~\cite{somenzi2012cudd}
package.
%

\paragraph{Prototype implementation.}
A prototype implementation of the proposed
approach was implemented in Python\footnote{https://github.com/XuanxiangHuang/fmp-experiments}.
The PySAT toolkit~\cite{imms-sat18} was employed to perform feature
membership encoding, 
and called Glucose 4~\cite{audemard2018glucose} SAT solver.
SDD/XpG models were loaded by using 
PySDD\footnote{https://github.com/wannesm/PySDD}/xpg\footnote{https://github.com/yizza91/xpg}
package.

\paragraph{Experimental procedure.}
To assess the efficiency of deciding feature membership, 
and for each classifier, 100 test instances
were randomly generated/selected.
For SDDs, all tested instances have prediction $\bot$.
(We didn't pick instances predicted to class $\top$
as this requires the compilation of a new classifier which may have different size).
Besides, for each instance, we randomly picked a feature
appearing in the model.
Hence for each SDD/XpG, we solved 100 queries.
The time for deciding FMP was limited to 1800 seconds.
And the time for finishing 100 queries was limited to 10 hours,
this means the average time for deciding FMP cannot exceed
6 minutes.
Note that for SDDs learned from LearnSDD, the reported number of features
includes both original features and generated features
(e.g. for \textit{Audio} the original number of features is 100).
Also note that PySDD offers canonical SDDs whose \textit{conditioning}
may take exponential time in the worst-case.
Nevertheless, this worst-case behaviour was not observed in the
experiments.

\paragraph{Results.}
Table \ref{tab:sdd} summarizes the obtained results of deciding FMP on
SDDs with two methods.
In this experiment,
it can be observed that the number of nodes of the tested SDD
is in the range of 1409 and 9472, and the number of features of tested SDD
is in the range of 175 and 748.
The \osm requires $m+1$ replicas, often leading to large CNF encodings. 
The increase on both the number of features and the number of nodes,
can results in timeouts being observed.
One observation is that the performance correlates inversely with
propositional formula size. For the \osm this is noticeable when the
number of clauses in the CNF formulas exceeds 7,000,000.
For \textit{s526n}, the \osm failed to solve all 100 queries.
For \textit{Accidents}, \textit{Audio}, \textit{DNA}, \textit{Jester}, 
\textit{KDD}, \textit{Mushrooms}, \textit{Netflix}, \textit{Plants} and \textit{RCV-1},
the \osm can only solve a small number of queries
(e.g. for DNA, only 18 queries are tested, and only 5 queries are solved,
13 queries out of 18 cannot be solved in 1800 seconds time limit,
and the rest 82 queries were not tested due to the 10 hours time limit.)

In contrast, the \tsm is much more efficient as the CNF encoding of
\tsm is much smaller (the average number of CNF clauses does not exceed
130,000).
For the SDDs compiled from 16 circuits, the \tsm successfully solve
all the queries. For any of the examples considered, the \tsm never
requires more than a few seconds to answer a query, and the average
running time is at least one order of magnitude smaller than that of
the \osm.
For the remaining SDDs, the average running time for \tsm to solve a
query is less than 25 seconds; this highlights the scalability of the
\tsm.
However, notice that for SDDs representing
\textit{Audio}, \textit{Jester}, \textit{Mushrooms} and \textit{Plants},
the largest running time for deciding FMP with the \tsm can exceed 3
minutes.
As a result, we analyzed these results in greater detail.
\autoref{fig:runtime} depicts a cactus plot showing the running time
(in seconds) of deciding FMP for these 4 datasets
(note that the runtime axis is scaled logarithmically, and the instances axis starts from 60).
As can be observed, for each of dataset, around 85 queries can be
solved in a few seconds. This means that the running times of the \tsm 
only exceeds a few seconds for a few concrete examples, and for a few
of the datasets considered.

Table \ref{tab:xpg} summarizes the obtained results of deciding FMP on
XpGs with two methods.
No timeout occurs in this experiment.
For XpGs reduced from DTs, the running time for deciding FMP is
negligible regardless the method we adopt, this is due to the number
of nodes of each tree XpG is small.
For XpGs reduced from OBDDs, despite the size of each XpG is not small,
using \tsm only takes maximal few seconds to solve a query.
Furthermore, even though the average running time of the \osm is not
prohibitive, the \tsm still outperforms the \osm by at least one order
of magnitude.

\section{Conclusions} \label{sec:conc}
This paper proves that, for classifiers for which one explanation can
be computed in polynomial time, then the feature membership problem is
in NP.
Furthermore, for SDDs and also classifiers that can be mapped to
explanation graphs (XpG's), this paper details two propositional
encodings to decide the existence of one explanation containing
desired feature.
The experiments confirm the practical efficiency and scalability of
one of the proposed encodings, both for SDDs and XpGs.

\section*{Acknowledgments}
This work was supported by the AI Interdisciplinary Institute ANITI,
funded by the French program ``Investing for the Future -- PIA3''
under Grant agreement no.\ ANR-19-PI3A-0004, and by the H2020-ICT38
project COALA ``Cognitive Assisted agile manufacturing for a Labor
force supported by trustworthy Artificial intelligence''.

\bibliographystyle{kr}
\input{paper.bibl}

\end{document}